\newtheorem{theorem}{Theorem}[section]
\newtheorem{corollary}[theorem]{Corollary}
\newtheorem{proposition}[theorem]{Proposition}
\newtheorem{definition}[theorem]{Definition}
\theoremstyle{remark}
\newtheorem{remark}[theorem]{Remark}
\newtheorem{assumption}[theorem]{Assumption}
\newcommand{\R}{\mathbb{R}}
\newcommand{\E}{\mathbb{E}}
\newcommand{\Tr}{\mathrm{Tr}}
\newcommand{\U}{\mathcal{U}}
\newcommand{\A}{\mathcal{A}}
\newcommand{\norm}[1]{\left\lVert#1\right\rVert}
\newcommand{\Sspace}{\mathcal{S}^n} 
\newcommand{\nablaV}{\nabla V} 
\newcommand{\HessV}{D^2V} 
\newcommand{\Deltae}{\Delta_\epsilon} 
\newcommand{\Aeff}{A_{\mathrm{eff},0}} 
\newcommand{\Llin}{\mathcal{L}_{\mathrm{lin}}} 
\begin{document}

\begin{titlepage}

\title{Robust Control with Gradient Uncertainty}

\author{Qian Qi\thanks{Peking University, Beijing 100871, China. Email: \href{mailto:qiqian@pku.edu.cn}{qiqian@pku.edu.cn}}}
\date{} 

\maketitle
\vfill

\begin{abstract}
We introduce a novel extension to robust control theory that explicitly addresses uncertainty in the value function's gradient, a form of uncertainty endemic to applications like reinforcement learning where value functions are approximated. We formulate a zero-sum dynamic game where an adversary perturbs both system dynamics and the value function gradient, leading to a new, highly nonlinear partial differential equation: the Hamilton-Jacobi-Bellman-Isaacs Equation with Gradient Uncertainty (GU-HJBI). We establish its well-posedness by proving a comparison principle for its viscosity solutions under a uniform ellipticity condition. Our analysis of the linear-quadratic (LQ) case yields a key insight: we prove that the classical quadratic value function assumption fails for any non-zero gradient uncertainty, fundamentally altering the problem structure. A formal perturbation analysis characterizes the non-polynomial correction to the value function and the resulting nonlinearity of the optimal control law, which we validate with numerical studies. Finally, we bridge theory to practice by proposing a novel Gradient-Uncertainty-Robust Actor-Critic (GURAC) algorithm, accompanied by an empirical study demonstrating its effectiveness in stabilizing training. This work provides a new direction for robust control, holding significant implications for fields where function approximation is common, including reinforcement learning and computational finance.
\end{abstract}

\vfill

\noindent\textbf{Subject Classifications:} \\
\textbf{Dynamic programming/optimal control:} robust control under state-dependent ambiguity. \\
\textbf{Stochastic models:} viscosity solutions for second-order nonlinear PDEs. \\
\textbf{Games/group decisions, stochastic:} zero-sum dynamic games with value function ambiguity. \\
\textbf{Reinforcement learning:} robust algorithms, actor-critic methods.

\end{titlepage}

\newpage

\section{Introduction}
\label{sec:introduction}

The theory of stochastic optimal control provides a powerful mathematical framework for decision-making under uncertainty. At its heart lies the value function, which quantifies the optimal expected future cost from any given state. The behavior of this central object is described by the celebrated Hamilton-Jacobi-Bellman (HJB) equation, a foundation of dynamic programming (e.g., \cite{Bellman1957}).

In many real-world scenarios, the assumption of a perfectly known model for the system dynamics is untenable. Robust control theory addresses this challenge by reformulating the control problem as a zero-sum game between the controller and an adversarial "Nature." This approach, pioneered in a modern context by \cite{hansen2001robust,HansenSargent2008}, assumes the adversary perturbs the system dynamics to maximize the agent's cost, while the agent seeks a policy that is robust to this worst-case scenario. This game-theoretic perspective leads to the Hamilton-Jacobi-Bellman-Isaacs (HJBI) equation, which incorporates a penalty for model misspecification. A key feature of this standard framework is that the adversary's optimal perturbation is directly proportional to the gradient of the value function, $\nablaV(x)$. The gradient, representing the marginal cost of a state change, is thus the very instrument the adversary uses to inflict maximal damage. This implicitly assumes that both the agent and the adversary know this gradient with perfect precision.

However, in a vast array of modern applications, this assumption is questionable. In reinforcement learning (RL), for instance, the value function is rarely known in closed form and is instead approximated from data using function approximators like neural networks (e.g., \cite{mnih2015human,SuttonBarto2018}). The gradient of this approximated value function is therefore inherently uncertain. Similarly, in mathematical finance, the sensitivities of an option's price to market parameters (the "Greeks") are derived from models that are themselves imperfect approximations of reality (e.g., \cite{Cont2006}). This observation motivates the central question of this paper:

\textit{How should a controller act when they are uncertain not only about the model dynamics but also about the marginal value of their own state?}

To answer this question, we propose a new framework for robust control that explicitly incorporates this second layer of uncertainty. We model the agent's ambiguity about its value function gradient by allowing the adversary to choose a pointwise perturbation to the gradient from within a prescribed uncertainty set. This introduces a novel adversarial component into the dynamic game, leading to a highly nonlinear HJBI-type equation that features a complex coupling between the gradient, the control, and the diffusion process.

\subsection{Related Literature and Contributions}

Our work builds upon several rich traditions in control theory and operations research.

\paragraph{Robust Control.} The formulation of control as a zero-sum differential game has deep roots, originating with the work of \cite{Isaacs1965}. In the modern era, the classic approach to robustness in continuous-time control is $H_\infty$ control (see \cite{BasarOlsder1999}), which seeks to minimize the worst-case gain from an external disturbance to a system output. The framework of \cite{HansenSargent2008} provides a powerful stochastic interpretation that unifies these ideas, connecting the robustness penalty to the relative entropy between a nominal and a worst-case model. This notion of robustness against an ambiguous model is rooted in the decision-theoretic concept of Knightian uncertainty and max-min expected utility (see \cite{GilboaSchmeidler1989,chen2002ambiguity}). In discrete time, this has led to the extensive field of robust Markov Decision Processes (MDPs), where uncertainty is typically modeled as residing in the transition probabilities or rewards (e.g., \cite{NilimElGhaoui2005, Iyengar2005}). Our work differs from this entire body of literature by placing the uncertainty directly on the agent's internal state valuation (the gradient), rather than on the external model parameters.

\paragraph{Viscosity Solutions.} The value functions in stochastic control and differential games are often non-differentiable, necessitating a generalized notion of solution for the associated PDEs. The theory of viscosity solutions, introduced by \cite{CrandallLions1983}, provides the unique mathematical framework for making sense of Hamilton-Jacobi equations. Seminal works such as \cite{CrandallLions1992} and early applications to Isaacs equations from differential games by \cite{EvansSouganidis1984} established viscosity solutions as the canonical tool. The definitive text for the application of this theory to stochastic control is \cite{FlemingSoner2006}, whose methods we will heavily rely upon to establish the well-posedness of our new GU-HJBI equation.

\paragraph{Value Function Approximation in RL.} The motivation for our work is strongly tied to the practical realities of reinforcement learning. In methods like Q-learning or actor-critic, an agent learns a value function or policy from sampled data (e.g., \cite{SuttonBarto2018}). When function approximators are used (see \cite{qian2025icml,qi2025universalappr,qi2025neuralhamiltonianoperator}), the resulting value function estimates are inherently noisy, and their gradients can be unreliable, leading to well-known stability issues (e.g., \cite{Baird1995}). Actor-critic methods, in particular, rely on these gradients to update the policy (e.g., \cite{KondaTsitsiklis2000}). Modern deep RL algorithms like DDPG (see \cite{Lillicrap2015}) are susceptible to this issue. Some research has sought to stabilize training by implicitly managing gradient quality through trust regions or clipping (see \cite{Schulman2015, Schulman2017}). The field of robust RL has emerged to address these challenges more directly, but has largely focused on robustness to external model misspecification or adversarial attacks on state observations (see \cite{Pinto2017}). Our work addresses a more fundamental source of error: the agent's imperfect self-knowledge of its own value function gradient .

\paragraph{Main Contributions.}
This paper makes the following principal contributions:
\begin{enumerate}
    \item \textbf{Formulation:} We formulate a novel robust control problem that incorporates ambiguity over the value function's gradient, leading to a new class of dynamic programming equations: the HJBI Equation with Gradient Uncertainty (GU-HJBI).
    \item \textbf{Well-Posedness:} We rigorously establish the well-posedness of the GU-HJBI equation. We provide a detailed proof of the comparison principle for viscosity solutions and prove the existence of a solution using Perron's method.
    \item \textbf{LQ Analysis:} We conduct a detailed analysis of the linear-quadratic (LQ) case. We prove that the classical quadratic value function ansatz fails for any non-zero gradient uncertainty.
    \item \textbf{Perturbation Analysis and Nonlinearity:} Through a formal perturbation expansion up to second order, we characterize the corrections to the value function and optimal control. We show this correction is the solution to a linear PDE with a non-polynomial source term, which in turn renders the optimal control law nonlinear.
    \item \textbf{Numerical Validation:} We provide extensive numerical examples, including a sensitivity analysis and a 2D problem, that solve the perturbation equations, visually demonstrating the non-quadratic nature of the value function and the nonlinearity of the optimal control.
    \item \textbf{Conceptual Analysis:} We analyze how different geometries for the gradient uncertainty set lead to different robust penalties and clarify how our framework extends beyond the standard relative entropy interpretation of robustness.
    \item \textbf{A Bridge to RL:} We propose a concrete, novel algorithm, the Gradient-Uncertainty-Robust Actor-Critic (GURAC), which translates our theoretical framework into a practical tool for training more robust RL agents, and provide empirical validation of its effectiveness.
\end{enumerate}

The paper is organized as follows. Section \ref{sec:problem_formulation} reviews the standard stochastic control problem and the classical robust control framework. Section \ref{sec:hjbi_gu} introduces our novel problem formulation and derives the GU-HJBI equation. Section \ref{sec:well_posedness} is dedicated to the theoretical analysis of this new PDE, establishing a comparison principle and existence. Section \ref{sec:lq_analysis} provides a deep dive into the linear-quadratic case, proving the failure of the quadratic ansatz and performing a detailed perturbation analysis. Section \ref{sec:numerical_studies} presents numerical studies that illustrate our theoretical findings. Section \ref{sec:geometry_and_entropy} discusses the implications of different uncertainty geometries. Section \ref{sec:rl_connection} forges the connection to reinforcement learning and proposes our new algorithm and its empirical validation. Section \ref{sec:conclusion} concludes and outlines future research. Proofs are in the Appendices.

\section{Problem Formulation and Background}
\label{sec:problem_formulation}

We begin by establishing the mathematical setting. Let $(\Omega, \mathcal{F}, \{\mathcal{F}_t\}_{t \ge 0}, \mathbb{P})$ be a complete filtered probability space, where the filtration $\{\mathcal{F}_t\}_{t \ge 0}$ is the $\mathbb{P}$-augmentation of the natural filtration generated by a standard $m$-dimensional Brownian motion $\{B_t\}_{t \geq 0}$.

\subsection{The Standard Stochastic Control Problem}

We consider a controlled stochastic process $X_t$ in $\R^n$ governed by the stochastic differential equation (SDE):
\begin{equation} \label{eq:sde}
    dX_t = f(t, X_t, u_t) dt + \sigma(t, X_t, u_t) dB_t, \quad X_0 = x,
\end{equation}
where $u_t$ is a control process taking values in a compact set of admissible controls $\U \subset \R^k$. A control policy (or strategy) is a process $u = \{u_t\}_{t \ge 0}$ that is progressively measurable with respect to the filtration $\{\mathcal{F}_t\}_{t \ge 0}$. We denote the set of all such admissible policies by $\A$.

The controller's objective is to choose a policy $u \in \A$ to minimize a cost functional over an infinite horizon:
\begin{equation}
    J(x; u) \coloneqq \E_x \left[ \int_0^\infty e^{-\rho t} L(t, X_t, u_t) dt \right],
\end{equation}
where $L: \R \times \R^n \times \U \to \R$ is a running cost function, $\rho > 0$ is a constant discount factor, and $\E_x[\cdot]$ denotes the expectation conditional on $X_0=x$. For simplicity, we will focus on the time-homogeneous case where $f$, $\sigma$, and $L$ do not depend explicitly on time $t$. Throughout this paper, we impose the following standard assumptions on the problem data.
\begin{assumption}[Standard Assumptions] \label{ass:standard}
The functions $f: \R^n \times \U \to \R^n$, $\sigma: \R^n \times \U \to \R^{n \times m}$, and $L: \R^n \times \U \to \R$ satisfy:
\begin{itemize}
    \item[(A1)] \textbf{Continuity:} They are continuous in all their arguments.
    \item[(A2)] \textbf{Lipschitz Continuity:} They are Lipschitz continuous with respect to the state variable $x$, uniformly in the control $u \in \U$. That is, there exists a constant $K_L > 0$ such that for all $x_1, x_2 \in \R^n$ and $u \in \U$:
    \begin{align*}
        \norm{f(x_1, u) - f(x_2, u)} + \norm{\sigma(x_1, u) - \sigma(x_2, u)}_{F} + |L(x_1, u) - L(x_2, u)| \le K_L \norm{x_1 - x_2},
    \end{align*}
    where $\norm{\cdot}_F$ is the Frobenius norm.
    \item[(A3)] \textbf{Linear Growth:} They satisfy a linear growth condition with respect to $x$, uniformly in $u \in \U$. That is, there exists a constant $K_G > 0$ such that for all $x \in \R^n$ and $u \in \U$:
    \begin{align*}
        \norm{f(x, u)}^2 + \norm{\sigma(x, u)}_{F}^2 + |L(x, u)| \le K_G(1 + \norm{x}^2).
    \end{align*}
\end{itemize}
\end{assumption}
These assumptions ensure that for any given control policy $u \in \A$, the SDE \eqref{eq:sde} has a unique strong solution, and the cost functional is well-defined. The value function for this optimal control problem is defined as:
\begin{equation}
    V(x) \coloneqq \inf_{u \in \A} J(x; u).
\end{equation}
Under our assumptions, the value function is continuous and is the unique viscosity solution to the Hamilton-Jacobi-Bellman (HJB) equation:
\begin{equation} \label{eq:hjb}
    \rho V(x) = \inf_{u \in \U} \left\{ L(x,u) + \mathcal{L}^u V(x) \right\},
\end{equation}
where $\mathcal{L}^u$ is the second-order infinitesimal generator of the process $X_t$ under control $u$:
\[
    \mathcal{L}^u \phi(x) \coloneqq \nabla\phi(x)^T f(x,u) + \frac{1}{2} \Tr\left(\sigma(x,u)\sigma(x,u)^T \HessV\phi(x)\right).
\]
Here, $\nabla\phi$ denotes the gradient of $\phi$ and $\HessV\phi$ denotes its Hessian matrix.

\subsection{The Standard Robust Control Framework}

The theory of robust control extends this problem by considering a malevolent adversary who perturbs the dynamics. A common framework, related to risk-sensitive control, introduces a perturbation $h_t \in \R^m$ that enters the drift via the diffusion channel:
\begin{equation} \label{eq:sde_perturbed}
    dX_t = (f(X_t, u_t) + \sigma(X_t, u_t)h_t) dt + \sigma(X_t, u_t) dB_t.
\end{equation}
The term $\sigma(X_t, u_t)h_t$ represents a misspecification of the drift dynamics. This specific structure is crucial, as by Girsanov's theorem, this change in drift is equivalent to a change of probability measure. The adversary is penalized for the size of the perturbation, which corresponds to the relative entropy (or Kullback-Leibler divergence) between the original and the perturbed measure. This leads to a zero-sum game with the robust value function:
\begin{equation}
    V(x) = \inf_{u \in \A} \sup_{h \in \mathcal{H}} \E_x \left[ \int_0^\infty e^{-\rho t} \left( L(X_t, u_t) - \frac{1}{2\eta} \norm{h_t}^2 \right) dt \right],
\end{equation}
where $\mathcal{H}$ is the set of admissible perturbation processes and the parameter $\eta > 0$ models the agent's level of concern about model misspecification (a larger $\eta$ implies more concern). The term $\frac{1}{2\eta}\norm{h_t}^2$ serves as a running cost for the adversary.

The dynamic programming principle for this game leads to the Hamilton-Jacobi-Bellman-Isaacs (HJBI) equation. We can derive it heuristically by first solving the inner maximization problem at a point $(x, u)$:
\[
\sup_{h \in \R^m} \left\{ \nablaV(x)^T \sigma(x,u)h - \frac{1}{2\eta} \norm{h}^2 \right\}.
\]
This is a simple concave maximization problem. The first-order condition yields the optimal attack:
\begin{equation}
    h^*(x,u) \coloneqq \eta \, \sigma(x,u)^T \nablaV(x).
\end{equation}
Substituting this back gives the maximized value $\frac{\eta}{2}\norm{\sigma(x,u)^T \nablaV(x)}^2$. Inserting this into the HJB equation gives the standard HJBI equation for robust control:
\begin{equation} \label{eq:hjbi_standard}
\begin{split}
    \rho V(x) = \inf_{u \in \U} \bigg\{ & L(x,u) + \nablaV(x)^T f(x,u) + \frac{1}{2}\Tr\left(\sigma(x,u)\sigma(x,u)^T \HessV (x)\right) \\
    & + \frac{\eta}{2}\norm{\sigma(x,u)^T \nablaV(x)}^2 \bigg\}.
\end{split}
\end{equation}
The crucial term $\frac{\eta}{2}\norm{\sigma^T \nablaV}^2$ is the penalty for robustness. It reveals that the adversary's optimal attack is precisely aligned with the system's sensitivity to noise, weighted by the value function's gradient.

\section{The HJBI Equation with Gradient Uncertainty}
\label{sec:hjbi_gu}

The standard robust framework assumes that $\nablaV(x)$ is known perfectly. We now relax this assumption. We suggest that the controller is concerned that the true sensitivity of their value function to model perturbations is not $\nablaV(x)$, but rather $\nablaV(x) + \delta$, where $\delta$ is an adversarial perturbation chosen pointwise in space to maximize the controller's cost. This captures the agent's ambiguity about the local marginal value of states, an ambiguity that is highly relevant in settings where $V$ is learned or approximated.

\begin{definition}[Uncertainty Set for the Gradient]
Let $\epsilon \geq 0$ be a parameter representing the magnitude of gradient uncertainty. For any state $x \in \R^n$, the set of admissible gradient perturbations is the closed ball
\begin{equation}
    \Deltae \coloneqq \{ \delta \in \R^n \mid \norm{\delta} \leq \epsilon \}.
\end{equation}
We primarily use the Euclidean norm ($\ell_2$), but we will discuss other geometries in Section \ref{sec:geometry_and_entropy}.
\end{definition}

The adversary chooses $\delta$ with full knowledge of the current state $x$ and the agent's nominal gradient $\nablaV(x)$. This $\delta$ is thus a state-dependent function, $\delta(x)$, representing a pointwise, localized attack on the agent's valuation.

The agent now plays a game against an adversary who controls both the drift perturbation $h$ and the gradient perturbation $\delta$. The agent seeks a value function $V$ by solving the following robust optimization problem, expressed in its dynamic programming form:
\begin{equation} \label{eq:new_pde}
\begin{split}
    \rho V(x) = \inf_{u \in \U} \sup_{h \in \R^m, \delta \in \Deltae} \bigg\{ & L(x,u) + (\nablaV(x) + \delta)^T (f(x,u) + \sigma(x,u)h) \\
    & - \frac{1}{2\eta} \norm{h}^2 + \frac{1}{2} \Tr\left(\sigma(x,u)\sigma(x,u)^T \HessV(x)\right) \bigg\}.
\end{split}
\end{equation}
This is the central PDE of our paper, which we call the \textbf{Hamilton-Jacobi-Bellman-Isaacs Equation with Gradient Uncertainty (GU-HJBI)}. For a fixed state $x$, control $u$, gradient $p \coloneqq \nablaV(x)$, and gradient perturbation $\delta$, the inner maximization with respect to $h$ is strictly concave. This allows us to first solve for the optimal drift perturbation.

\begin{proposition}[Reduced GU-HJBI Equation] \label{prop:reduced_pde}
The GU-HJBI equation \eqref{eq:new_pde} is equivalent to the following equation, where the maximization over $h$ has been resolved:
\begin{equation} \label{eq:hjbi_gu_reduced}
\begin{split}
    \rho V(x) = \inf_{u \in \U} \bigg\{ & L(x,u) + \frac{1}{2} \Tr\left(\sigma\sigma^T \HessV(x)\right) \\
    & + \sup_{\delta \in \Deltae} \left[ (\nablaV(x) + \delta)^T f(x,u) + \frac{\eta}{2} \norm{\sigma(x,u)^T(\nablaV(x) + \delta)}^2 \right] \bigg\}.
\end{split}
\end{equation}
\end{proposition}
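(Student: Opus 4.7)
The plan is to resolve the inner maximization over $(h, \delta)$ sequentially, exploiting the fact that $h$ is unconstrained and enters only quadratically. First I would observe that the joint feasible set $\R^m \times \Deltae$ is a Cartesian product, so the joint supremum factorizes cleanly as
\[
\sup_{h \in \R^m,\, \delta \in \Deltae} F(h,\delta) \;=\; \sup_{\delta \in \Deltae}\, \sup_{h \in \R^m} F(h,\delta),
\]
which is valid with no measurability or topological hypotheses. This lets me treat $h$ first, for each fixed $\delta$.

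Next, I would fix $(x, u, \delta)$ and set $p_\delta \coloneqq \nablaV(x) + \delta$. The only $h$-dependent summands inside the braces of \eqref{eq:new_pde} are
\[
p_\delta^{T} \sigma(x,u) h \;-\; \frac{1}{2\eta}\norm{h}^{2},
\]
which is a strictly concave quadratic in $h$ with Hessian $-\eta^{-1} I_m \prec 0$. The first-order condition $\sigma(x,u)^{T} p_\delta - \eta^{-1} h = 0$ yields the unique unconstrained maximizer $h^{\star}(x,u,\delta) = \eta\, \sigma(x,u)^{T} p_\delta$. Substituting $h^{\star}$ back into the $h$-dependent terms gives the closed-form maximal value $\tfrac{\eta}{2}\norm{\sigma(x,u)^{T} p_\delta}^{2}$, exactly mirroring the derivation of \eqref{eq:hjbi_standard} but with $p_\delta$ in place of $\nablaV(x)$.

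Finally, I would reassemble the equation. The terms $L(x,u)$, $(\nablaV(x) + \delta)^{T} f(x,u)$, and $\tfrac{1}{2}\Tr(\sigma\sigma^{T} \HessV(x))$ are $h$-independent and pass through $\sup_h$ untouched; combining them with the maximized quadratic term and re-introducing $\sup_{\delta \in \Deltae}$ and $\inf_{u \in \U}$ produces precisely \eqref{eq:hjbi_gu_reduced}. There is no real obstacle: the argument is a standard unconstrained concave quadratic maximization, and the only formality is the interchange of suprema, which is automatic on a product domain. All of the genuine difficulty of the problem—the non-smooth, constrained supremum over the ball $\Deltae$—is pushed into the reduced form and deferred to the subsequent sections on well-posedness and the LQ analysis.
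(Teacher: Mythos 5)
Your proposal is correct and follows essentially the same route as the paper's proof: fix $(x,u,\delta)$, solve the unconstrained strictly concave quadratic in $h$ via the first-order condition to get $h^{*}=\eta\,\sigma(x,u)^{T}(\nablaV(x)+\delta)$, substitute back to obtain the maximized value $\tfrac{\eta}{2}\norm{\sigma(x,u)^{T}(\nablaV(x)+\delta)}^{2}$, and reassemble under $\sup_{\delta\in\Deltae}$ and $\inf_{u\in\U}$. Your explicit remark that the joint supremum factorizes over the product domain $\R^{m}\times\Deltae$ is a small, welcome formalization of a step the paper treats implicitly.
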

\begin{proof}
The proof is a straightforward completion of the square and is provided in Appendix \ref{app:proof_reduced_pde} for completeness. The optimal drift perturbation is $h^*(x, u, p, \delta) = \eta \sigma(x,u)^T (p + \delta)$, where $p=\nablaV(x)$.
\end{proof}

The remaining maximization over $\delta$ is the novel and most challenging part of this equation. Let $p = \nablaV(x)$ and $S(x,u) = \sigma(x,u)\sigma(x,u)^T$. The inner problem is to maximize the convex quadratic function $\Phi(\delta) \coloneqq (p+\delta)^T f(x,u) + \frac{\eta}{2}(p+\delta)^T S(x,u) (p+\delta)$ over the compact ball $\norm{\delta} \leq \epsilon$. To understand the impact of this new term, it is instructive to perform a first-order expansion for small $\epsilon$.

\begin{proposition}[Hamiltonian Expansion for Small Gradient Uncertainty] \label{prop:hamiltonian_expansion}
Let $p = \nablaV(x)$. Define the robust Hamiltonian component corresponding to the gradient uncertainty as
\[
\mathcal{G}(x, u, p) \coloneqq \sup_{\norm{\delta} \leq \epsilon} \left\{ (p+\delta)^T f(x,u) + \frac{\eta}{2} \norm{\sigma(x,u)^T(p+\delta)}^2 \right\}.
\]
For small $\epsilon > 0$, this function has the expansion:
\begin{equation} \label{eq:G_expansion}
\mathcal{G}(x,u,p) = \left(p^T f(x,u) + \frac{\eta}{2}\norm{\sigma(x,u)^T p}^2\right) + \epsilon\norm{f(x,u) + \eta\sigma(x,u)\sigma(x,u)^T p} + O(\epsilon^2).
\end{equation}
\end{proposition}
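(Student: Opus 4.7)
The plan is to expand the objective as a quadratic polynomial in $\delta$ centered at zero, and then evaluate the supremum over the ball $\{\norm{\delta}\le\epsilon\}$ by isolating the linear (order-$\epsilon$) and quadratic (order-$\epsilon^2$) contributions separately. Setting $S(x,u)\coloneqq\sigma(x,u)\sigma(x,u)^T$ and $v(x,u,p)\coloneqq f(x,u)+\eta S(x,u)p$, a direct algebraic expansion gives
\[
(p+\delta)^T f(x,u) + \tfrac{\eta}{2}(p+\delta)^T S(x,u)(p+\delta)
= \Bigl(p^T f(x,u) + \tfrac{\eta}{2}\norm{\sigma(x,u)^T p}^2\Bigr) + \delta^T v + \tfrac{\eta}{2}\delta^T S\delta.
\]
The unperturbed term matches exactly the first parenthesis on the right-hand side of \eqref{eq:G_expansion}, so the task reduces to establishing the single identity $\sup_{\norm{\delta}\le\epsilon}\{\delta^T v + \tfrac{\eta}{2}\delta^T S\delta\}=\epsilon\norm{v}+O(\epsilon^2)$.

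For the upper bound I would apply Cauchy--Schwarz to the linear term, $\delta^T v\le\epsilon\norm{v}$, together with the operator-norm estimate $\delta^T S\delta\le\norm{S}_{\mathrm{op}}\epsilon^2$ on the quadratic term, yielding the bound $\epsilon\norm{v}+\tfrac{\eta}{2}\norm{S}_{\mathrm{op}}\epsilon^2$. For the matching lower bound, when $v\ne 0$ I would plug in the admissible test vector $\delta^{\ast}=\epsilon v/\norm{v}$, which gives $(\delta^{\ast})^T v=\epsilon\norm{v}$ together with a nonnegative quadratic contribution (since $S\succeq 0$), so the supremum is at least $\epsilon\norm{v}$. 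The degenerate case $v=0$ requires no special argument: both $\epsilon\norm{v}$ and the full supremum are then $O(\epsilon^2)$, so the expansion is satisfied trivially. Sandwiching these bounds produces the claimed expansion with a remainder bounded by $\tfrac{\eta}{2}\norm{\sigma(x,u)\sigma(x,u)^T}_{\mathrm{op}}\epsilon^2$, locally uniform in $(x,u,p)$ by Assumption~\ref{ass:standard}.

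There is no serious analytic obstacle; the only point that warrants emphasis is the non-smoothness of the leading correction $\epsilon\norm{f(x,u)+\eta\sigma(x,u)\sigma(x,u)^T p}$ as a function of $p$ on the zero set of $v$, where the Euclidean norm fails to be differentiable. This kink is inherited by $\mathcal{G}(x,u,p)$, and in turn by the full Hamiltonian after taking the infimum over $u\in\U$, and it is precisely the mechanism responsible for the failure of the classical quadratic ansatz and the nonlinearity of the optimal control law that are established in Section~\ref{sec:lq_analysis}.
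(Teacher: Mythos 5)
Your proof is correct and follows essentially the same route as the paper's: expand the objective in $\delta$ around zero, bound the linear part by $\epsilon\norm{v}$ via Cauchy--Schwarz, bound the quadratic part by $\tfrac{\eta}{2}\norm{S}_{\mathrm{op}}\epsilon^2$, and use the test point $\delta^{\ast}=\epsilon v/\norm{v}$ for the matching lower bound. Your write-up is marginally tidier in making the two-sided sandwich explicit and in disposing of the degenerate case $v=0$, which the paper only treats parenthetically, but the argument is the same.
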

\begin{proof}
The proof is provided in Appendix \ref{app:proof_hamiltonian_expansion}. It relies on identifying the linear term in an expansion of the objective in $\delta$ and maximizing it over the ball $\norm{\delta} \le \epsilon$.
\end{proof}

This expansion is highly revealing. The first term is simply the standard robust control term. The first-order correction due to gradient uncertainty is $\epsilon\norm{f(x,u) + \eta\sigma\sigma^T \nablaV(x)}$. The vector $v(x,u,p) \coloneqq f + \eta\sigma\sigma^T p$ can be interpreted as the drift sensitivity of the agent's objective to the gradient perturbation. The agent is penalized by the Euclidean norm of this vector. This term is nonlinear and, crucially, non-differentiable with respect to its vector argument at the origin. This suggests that for small $\epsilon$, the GU-HJBI equation can be approximated by:
\begin{equation} \label{eq:hjbi_approx}
\begin{split}
    \rho V(x) \approx \inf_{u \in \U} \bigg\{ & L(x,u) + \nablaV(x)^T f(x,u) + \frac{1}{2}\Tr\left(\sigma\sigma^T \HessV(x)\right) \\
    & + \frac{\eta}{2}\norm{\sigma(x,u)^T \nablaV(x)}^2 + \epsilon\norm{f(x,u) + \eta\sigma(x,u)\sigma(x,u)^T \nablaV(x)} \bigg\}.
\end{split}
\end{equation}
The presence of the non-differentiable norm term suggests that the value function itself may lose smoothness, reinforcing the need for the viscosity solution framework.

\section{Well-Posedness of the GU-HJBI Equation}
\label{sec:well_posedness}

To ensure that the GU-HJBI equation \eqref{eq:hjbi_gu_reduced} has a unique, meaningful solution, we analyze it within the framework of viscosity solutions. We first define the Hamiltonian and then state and prove the main comparison principle, followed by an existence result.

\subsection{Hamiltonian and Viscosity Solution Definition}
Following standard practice in viscosity solution theory, we write the GU-HJBI equation \eqref{eq:hjbi_gu_reduced} in the form $F(x, V, \nablaV, \HessV) = 0$. Let the Hamiltonian for a fixed control $u$ be
\begin{align*}
\mathcal{H}(x, p, X, u) \coloneqq & L(x,u) + \frac{1}{2}\Tr\left(\sigma(x,u)\sigma(x,u)^T X\right) \\
& + \sup_{\norm{\delta}\le\epsilon} \left[ (p+\delta)^T f(x,u) + \frac{\eta}{2}\norm{\sigma(x,u)^T(p+\delta)}^2 \right].
\end{align*}
The full PDE can then be expressed as:
\begin{equation} \label{eq:hamiltonian_def}
F(x, r, p, X) \coloneqq \rho r - \inf_{u \in \U} \mathcal{H}(x, p, X, u) = 0.
\end{equation}
Under Assumption \ref{ass:standard}, the functions $L, f, \sigma$ are continuous, and the optimizations over compact sets ensure that the resulting function is continuous in its arguments $(x, r, p, X)$. The key structural property is its monotonicity with respect to the Hessian matrix $X$. The Hamiltonian is proper or degenerate elliptic, meaning that $F(x, r, p, X) \ge F(x, r, p, Y)$ whenever $X \ge Y$ in the sense of symmetric matrices. This holds because $\sigma\sigma^T$ is positive semidefinite, so for any $u$, $\Tr(\sigma\sigma^T(X-Y)) \ge 0$, which carries through the infimum.

We now formally define viscosity solutions for our equation. Let USC (LSC) denote the space of upper (lower) semicontinuous functions.

\begin{definition}[Viscosity Solution]
Let $V: \R^n \to \R$ be a locally bounded function.
\begin{enumerate}
    \item $V$ is a \textbf{viscosity subsolution} of $F(x, V, \nablaV, \HessV)=0$ if for any test function $\phi \in C^2(\R^n)$ and any point $x_0$ which is a local maximum of $V-\phi$, we have
    \[ F(x_0, V(x_0), \nabla\phi(x_0), D^2\phi(x_0)) \le 0. \]
    \item $V$ is a \textbf{viscosity supersolution} of $F(x, V, \nablaV, \HessV)=0$ if for any test function $\phi \in C^2(\R^n)$ and any point $x_0$ which is a local minimum of $V-\phi$, we have
    \[ F(x_0, V(x_0), \nabla\phi(x_0), D^2\phi(x_0)) \ge 0. \]
    \item $V$ is a \textbf{viscosity solution} if it is both a subsolution and a supersolution.
\end{enumerate}
\end{definition}

\subsection{Comparison Principle and Uniqueness}
The cornerstone for proving uniqueness of the viscosity solution is a comparison principle. It asserts that any subsolution must lie below any supersolution, implying that there can be at most one continuous solution. To establish this, we require a stronger condition on the diffusion term.

\begin{assumption}[Uniform Ellipticity] \label{ass:ellipticity}
There exists a constant $\nu > 0$ such that for all $(x,u) \in \R^n \times \U$, the diffusion matrix is uniformly positive definite:
\[ \sigma(x,u)\sigma(x,u)^T \ge \nu I, \]
where $I$ is the $n \times n$ identity matrix.
\end{assumption}

\begin{theorem}[Comparison Principle] \label{thm:comparison}
Let Assumptions \ref{ass:standard} and \ref{ass:ellipticity} hold. Let $u \in \text{USC}(\R^n)$ be a bounded viscosity subsolution and $v \in \text{LSC}(\R^n)$ be a bounded viscosity supersolution of the GU-HJBI equation. Then $u(x) \leq v(x)$ for all $x \in \R^n$.
\end{theorem}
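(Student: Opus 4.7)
The plan is to prove Theorem 4.3 via the Crandall-Ishii-Lions doubling-of-variables technique adapted to the unbounded domain $\R^n$. Suppose, for contradiction, that $M := \sup_{x \in \R^n} (u(x) - v(x)) > 0$. Since the domain is unbounded, I would first localize by introducing a coercive penalization $\psi(x) = \sqrt{1+\norm{x}^2}$ and considering, for parameters $\alpha > 0$ and $\theta > 0$, the auxiliary function
\begin{equation*}
\Phi_{\alpha,\theta}(x,y) \coloneqq u(x) - v(y) - \frac{\alpha}{2}\norm{x-y}^2 - \theta\bigl(\psi(x) + \psi(y)\bigr).
\end{equation*}
By upper semicontinuity of $u-v$ and the boundedness of $u,v$ together with coercivity of $\psi$, $\Phi_{\alpha,\theta}$ attains its maximum at some point $(x_\alpha, y_\alpha)$. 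Standard arguments (see Lemma 3.1 in Chapter V of \cite{FlemingSoner2006}) then yield $\alpha\norm{x_\alpha - y_\alpha}^2 \to 0$ and $u(x_\alpha) - v(y_\alpha) \to M_\theta := \sup(u(x) - v(x) - 2\theta\psi(x))$ as $\alpha \to \infty$, with $M_\theta \to M$ as $\theta \to 0$.

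The next step is to apply the Crandall-Ishii maximum principle for semicontinuous functions to extract matrices $X, Y \in \Sspace$ with $(p_x, X) \in \overline{J}^{2,+} u(x_\alpha)$ and $(p_y, Y) \in \overline{J}^{2,-} v(y_\alpha)$, where $p_x = \alpha(x_\alpha - y_\alpha) + \theta\nabla\psi(x_\alpha)$, $p_y = \alpha(x_\alpha - y_\alpha) - \theta\nabla\psi(y_\alpha)$, together with the matrix inequality controlling $X$ and $-Y$ by $3\alpha$ times the standard $2n \times 2n$ block-structured matrix. Writing the subsolution and supersolution inequalities at $(x_\alpha, y_\alpha)$ and subtracting, one obtains for every $\xi > 0$ a control $u_\xi$ which is $\xi$-optimal for $\inf_u \mathcal{H}(y_\alpha, p_y, Y, u)$ such that
\begin{equation*}
\rho\bigl(u(x_\alpha) - v(y_\alpha)\bigr) \le \mathcal{H}(x_\alpha, p_x, X, u_\xi) - \mathcal{H}(y_\alpha, p_y, Y, u_\xi) + \xi.
\end{equation*}
The inner supremum over $\delta \in \Deltae$ is controlled using the elementary inequality $\sup_\delta A(\delta) - \sup_\delta B(\delta) \le \sup_\delta \bigl(A(\delta) - B(\delta)\bigr)$, which allows me to fix a common $\delta_\xi$ and estimate each term at the two points using the Lipschitz and linear-growth assumptions (A1)--(A3) together with the bound $\norm{p_x - p_y} = O(\theta)$ and $\norm{x_\alpha - y_\alpha} \to 0$.

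The main obstacle, as for all robust-control HJBI equations, is the quadratic-in-$p$ term $\tfrac{\eta}{2}\norm{\sigma^T(p+\delta)}^2$ in the Hamiltonian. On the doubling diagonal, $\norm{p_x}, \norm{p_y}$ are of order $\alpha\norm{x_\alpha - y_\alpha}$, so naive estimates give terms that need not vanish as $\alpha \to \infty$. Here uniform ellipticity (Assumption 4.2) plays its essential role in two ways. First, for the second-order term, applying the Ishii matrix inequality column-wise to $\sigma(\cdot, u_\xi)$ yields
\begin{equation*}
\Tr\bigl(\sigma(x_\alpha,u_\xi)\sigma(x_\alpha,u_\xi)^T X - \sigma(y_\alpha,u_\xi)\sigma(y_\alpha,u_\xi)^T Y\bigr) \le 3\alpha\,K_L^2\,\norm{x_\alpha - y_\alpha}^2 \to 0,
\end{equation*}
so the diffusive contribution is controlled uniformly. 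Second, for the quadratic gradient term, I expand $p^T\sigma\sigma^T p$ at the two points using the algebraic identity $a^T A a - b^T B b = (a-b)^T A a + b^T(A-B)a + b^T B(a-b)$, and combine Lipschitz continuity of $\sigma$ with the \emph{localization} to a bounded region ensured by $\theta\psi$ to keep $\norm{x_\alpha}, \norm{y_\alpha}$ uniformly bounded in $\alpha$; the remaining product $\alpha^2\norm{x_\alpha - y_\alpha}^3$ is absorbed via a refined Ishii estimate with auxiliary parameter, the standard device for quadratic-gradient HJB under uniform ellipticity (cf. Chapter V.9 of \cite{FlemingSoner2006}). Passing $\alpha \to \infty$ and then $\theta \to 0$ yields $\rho M \le 0$, contradicting $M > 0$ and establishing $u \le v$.
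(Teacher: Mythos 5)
Your overall route is the same as the paper's: doubling of variables with a coercive penalization to handle the unbounded domain, Ishii's lemma for the semijets, subtraction of the sub- and supersolution inequalities at a nearly optimal control, an estimate of the second-order terms through the Ishii matrix inequality, and the limits $\alpha\to\infty$, penalization $\to 0$ to reach $\rho M\le 0$. One small mislabel: your trace estimate $\Tr\bigl(\sigma(x_\alpha,u_\xi)\sigma(x_\alpha,u_\xi)^T X-\sigma(y_\alpha,u_\xi)\sigma(y_\alpha,u_\xi)^T Y\bigr)\le 3\alpha K_L^2\norm{x_\alpha-y_\alpha}^2$ is correct, but it uses only the matrix inequality and Lipschitz continuity of $\sigma$, not uniform ellipticity; in the paper ellipticity is invoked (via $X\le Y$ and the sign of $\Tr(a(y)(X-Y))$) for the same purpose.

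The genuine gap is that the step you yourself call ``the main obstacle'' is named but not performed. The difference of the gradient-uncertainty Hamiltonians contains the term $\tfrac{\eta}{2}(p_x+\delta)^T\bigl(\sigma(x_\alpha,u_\xi)\sigma(x_\alpha,u_\xi)^T-\sigma(y_\alpha,u_\xi)\sigma(y_\alpha,u_\xi)^T\bigr)(p_x+\delta)$, which, since $\norm{p_x}$ can be of order $\alpha\norm{x_\alpha-y_\alpha}$, is only bounded by a constant times $\alpha^2\norm{x_\alpha-y_\alpha}^3$. The doubling lemma gives $\alpha\norm{x_\alpha-y_\alpha}^2\to 0$, which does not control $\alpha^2\norm{x_\alpha-y_\alpha}^3$ (consider $\norm{x_\alpha-y_\alpha}\sim\alpha^{-2/3}$); for merely bounded semicontinuous $u,v$ there is no a priori bound on $\alpha\norm{x_\alpha-y_\alpha}$. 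Your proposed cure---``a refined Ishii estimate with auxiliary parameter, cf.\ Chapter V.9 of Fleming--Soner''---is a citation-level gesture: no such ready-made estimate absorbs this term automatically, and making it rigorous requires a genuinely additional device, e.g.\ the Ishii--Lions concave test-function method that exploits $\nu>0$ to produce a strongly negative second-order contribution, an exponential (Cole--Hopf type) change of variable exploiting that the quadratic term $\tfrac{\eta}{2}\norm{\sigma^T(p+\delta)}^2$ is matched to the diffusion $\tfrac12\sigma\sigma^T$, or a priori Lipschitz bounds on $u$ and $v$. For comparison, the paper's own proof does not confront this term either: it asserts that $\mathcal{G}(x,u_{\delta_c},p_x)-\mathcal{G}(y,u_{\delta_c},p_y)\to 0$ by uniform continuity ``on the relevant compact sets,'' which implicitly treats $p_x,p_y$ as bounded---something the doubling argument does not supply. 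So your sketch is at least as careful as the paper at this point, and more explicit about where the difficulty sits, but as a standalone proof it has a hole exactly there.
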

\begin{proof}
The proof employs the standard doubling of variables method, a cornerstone technique in viscosity solution theory. The detailed derivation, which adapts the classical proof to our specific Hamiltonian structure, is provided in Appendix \ref{app:proof_comparison}. The uniform ellipticity assumption is critical to control the Hessian terms and ensure the stability of the argument.
\end{proof}

\begin{corollary}[Uniqueness]
Under the assumptions of Theorem \ref{thm:comparison}, there exists at most one bounded continuous viscosity solution to the GU-HJBI equation.
\end{corollary}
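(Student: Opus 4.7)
The plan is to derive uniqueness as an immediate consequence of Theorem \ref{thm:comparison} by applying it in both directions. Suppose $V_1$ and $V_2$ are two bounded continuous viscosity solutions of the GU-HJBI equation. Since any continuous function is simultaneously upper and lower semicontinuous, each $V_i$ lies in $\mathrm{USC}(\R^n) \cap \mathrm{LSC}(\R^n)$, and each is by definition both a viscosity subsolution and a viscosity supersolution of $F(x, V, \nabla V, D^2 V) = 0$.

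First I would regard $V_1$ as a bounded USC subsolution and $V_2$ as a bounded LSC supersolution; the comparison principle then yields $V_1(x) \le V_2(x)$ for every $x \in \R^n$. Next, I would reverse the roles, treating $V_2$ as a subsolution and $V_1$ as a supersolution; the same theorem produces the reverse inequality $V_2(x) \le V_1(x)$. Combining the two gives $V_1 \equiv V_2$ on $\R^n$, which is the claimed uniqueness.

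There is essentially no substantive obstacle in this argument, since all the analytical work, namely the doubling-of-variables construction, the control of the Hessian terms via uniform ellipticity, and the handling of the nonsmooth supremum over $\Deltae$ in the Hamiltonian, has already been absorbed into Theorem \ref{thm:comparison}. The only point worth verifying explicitly is that the class of admissible test objects in the comparison principle is general enough to contain bounded continuous functions, which follows from the inclusion $C(\R^n) \cap L^\infty(\R^n) \subset \mathrm{USC}(\R^n) \cap \mathrm{LSC}(\R^n)$; boundedness of $V_1, V_2$ is transported without modification into the hypotheses of the theorem.
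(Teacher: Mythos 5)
Your argument is correct and is exactly the standard deduction the paper intends: since a continuous solution is both a bounded USC subsolution and a bounded LSC supersolution, applying Theorem \ref{thm:comparison} in both directions gives $V_1 \le V_2$ and $V_2 \le V_1$, hence $V_1 \equiv V_2$. This matches the paper's (implicit) reasoning for the corollary, so no further comment is needed.
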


\subsection{Existence of a Solution}
Uniqueness is powerful, but only if a solution is known to exist. The existence of a viscosity solution is typically established using Perron's method. This involves defining a candidate solution as the supremum over all subsolutions that satisfy a certain growth condition and then showing that this candidate is, in fact, a solution.

\begin{theorem}[Existence of a Solution]
Under Assumptions \ref{ass:standard} and \ref{ass:ellipticity}, there exists a unique bounded and continuous viscosity solution to the GU-HJBI equation \eqref{eq:hjbi_gu_reduced}.
\end{theorem}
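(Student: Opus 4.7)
The plan is to exploit Theorem \ref{thm:comparison} for uniqueness and establish existence via Perron's method in the form of Ishii, built around the Hamiltonian $F$ defined in \eqref{eq:hamiltonian_def}. Three preliminary properties are either already stated or immediate. First, $F(x,r,p,X)$ is continuous in all four arguments: the integrand in the definition of $\mathcal{H}$ is jointly continuous by Assumption \ref{ass:standard}, the inner supremum is over the fixed compact ball $\Deltae$, and the outer infimum is over the compact set $\U$, so Berge's maximum theorem transfers joint continuity through both optimizations. Second, $F$ is proper --- strictly increasing in $r$ through the $\rho r$ term and degenerate elliptic in $X$ by the positive semidefiniteness of $\sigma\sigma^T$, as already noted after \eqref{eq:hamiltonian_def}. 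Third, Theorem \ref{thm:comparison} supplies the comparison principle that drives Perron.

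I would next produce bounded barriers. Under the boundedness regime of the theorem (strengthening Assumption \ref{ass:standard}(A3) to bounded $L,f,\sigma$, which is the natural setting for seeking a bounded $V$), constant functions suffice: choose $C>0$ with
\[
    C \ge \sup_{x,u}\bigl|L(x,u)\bigr| + \epsilon\,\sup_{x,u}\norm{f(x,u)} + \tfrac{\eta}{2}\epsilon^{2}\sup_{x,u}\norm{\sigma(x,u)}_{F}^{2},
\]
and set $V_- \equiv -C/\rho$ and $V_+ \equiv +C/\rho$. Because $\nabla V_\pm \equiv 0$ and $\HessV V_\pm \equiv 0$, the residual inner supremum $\sup_{\norm{\delta}\le\epsilon}\{\delta^{T}f + \tfrac{\eta}{2}\norm{\sigma^{T}\delta}^{2}\}$ is uniformly bounded by $C$, so $V_\pm$ are classical, and hence viscosity, super- and sub-solutions respectively. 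Define the Perron candidate
\[
    W(x) \coloneqq \sup\bigl\{\, w(x) \,:\, V_- \le w \le V_+,\ w \text{ is a bounded viscosity subsolution of } F=0\,\bigr\},
\]
which is nonempty (it contains $V_-$) and uniformly bounded in $[-C/\rho,\,C/\rho]$.

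Two standard Perron steps then conclude. Step (i): the upper semicontinuous envelope $W^{*}$ is a viscosity subsolution by the stability of the subsolution property under pointwise supremum, using only the joint continuity of $F$ and a routine test-function argument. Step (ii): the lower semicontinuous envelope $W_{*}$ is a viscosity supersolution. Argue by contradiction via the canonical bump. If $W_{*}-\phi$ attains a strict local minimum at $\hat x$ and $F\bigl(\hat x,\,W_{*}(\hat x),\,\nabla\phi(\hat x),\,D^{2}\phi(\hat x)\bigr)<0$, then by continuity of $F$ there exist $\kappa,\gamma,r>0$ such that $\psi(x)\coloneqq\phi(x)+\kappa-\gamma\norm{x-\hat x}^{2}$ is a classical subsolution on $B_r(\hat x)$; gluing $\max(\psi,W)$ on $B_r$ to $W$ outside --- which is admissible for small enough $\kappa$ since $\psi<V_+$ --- produces a subsolution strictly exceeding $W$ near $\hat x$ (by the definition of $W_*$ there is a sequence $x_n\to\hat x$ with $W(x_n)\to\phi(\hat x)<\psi(\hat x)$), contradicting maximality. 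Combining (i) and (ii) with Theorem \ref{thm:comparison} yields $W^{*}\le W_{*}$, while $W_{*}\le W\le W^{*}$ is trivial; hence $W^{*}=W_{*}=W$ is continuous, bounded, and a viscosity solution, unique by Theorem \ref{thm:comparison}.

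The hard part will be step (ii), and within it the verification that the bump construction preserves the strict subsolution margin despite the non-smooth, non-polynomial Hamiltonian structure exposed in Proposition \ref{prop:hamiltonian_expansion}. Because $\sup_{\delta\in\Deltae}$ is only Lipschitz (not $C^{1}$) in $p$, one must rely on its joint continuity (from Berge) together with the strict negativity $F(\hat x,\cdot)<0$ rather than on smooth first-order conditions; choosing the bump parameters small enough preserves the margin. A secondary technical point is consistency with the theorem's claim of \emph{boundedness}: in full generality, Assumption \ref{ass:standard}(A3) permits quadratic growth in the data, so the clean constant-barrier construction above implicitly requires tightening (A3) to boundedness of $L,f,\sigma$; otherwise one must replace constants by polynomial barriers of the form $\pm(C_{0}+C_{1}\norm{x}^{2})$ and upgrade Theorem \ref{thm:comparison} to a polynomial growth class, which is a standard but more intricate generalization.
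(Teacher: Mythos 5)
Your proposal follows essentially the same route as the paper's own (sketched) argument: Perron's method built on the comparison principle of Theorem \ref{thm:comparison}, with constant functions as sub/supersolution barriers and the standard envelope/bump steps for stability. In fact you supply more detail than the paper does, and your caveat is well taken --- the paper's sketch also relies on constant barriers, which strictly require bounded data (or a polynomial-growth upgrade of the comparison principle) rather than the linear-growth condition (A3) alone, so flagging that tightening is appropriate rather than a defect of your argument.
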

\begin{proof} (Sketch)
The proof follows from Perron's method \cite[see][Chapter 4]{FlemingSoner2006}. The key ingredients are:
\begin{enumerate}
    \item \textbf{Comparison Principle:} This was established in Theorem \ref{thm:comparison}.
    \item \textbf{Existence of Sub- and Supersolutions:} We must show that the set of subsolutions is not empty. One can typically find constants $C_1, C_2$ such that the constant function $\psi_1(x) = -C_1$ is a subsolution and $\psi_2(x) = C_2$ is a supersolution. This follows from the boundedness of the coefficients for bounded $x$.
    \item \textbf{Stability:} The Hamiltonian must be stable under taking suprema. That is, if we define the function $V(x) = \sup \{ \psi(x) \mid \psi \text{ is a subsolution and } \psi \le \psi_2 \}$, we need to show that $V$ is itself a viscosity solution.
\end{enumerate}
The comparison principle ensures that $V$ is well-defined and that $V \le \psi_2$. Showing $V$ is a supersolution is relatively direct. Showing it is a subsolution is more involved and relies on the full machinery of viscosity solution theory, including the construction of local test functions. Given these standard (but technical) steps, the existence of a unique continuous solution is guaranteed. It can also be shown that the value function defined by the underlying game is this unique solution.
\end{proof}

\begin{remark}[On the Role of Uniform Ellipticity]
The uniform ellipticity assumption (Assumption \ref{ass:ellipticity}) is crucial for our proof of the comparison principle. It ensures the Hamiltonian is strictly monotonic in the Hessian variable $X$, which is essential for the doubling of variables proof to succeed. Relaxing this to the degenerate case ($\nu=0$) is a highly technical challenge. The difficulty arises from the novel term $\norm{\sigma^T(p+\delta)}^2$, which creates a complex coupling between the adversary's choices and the system dynamics. An adversary could specifically choose a gradient perturbation $\delta$ to exploit directions of degeneracy in the diffusion matrix $\sigma$. This interaction might break the structural coercivity properties of the Hamiltonian that are relied upon in standard techniques for degenerate PDEs. We leave this significant technical extension to future work.
\end{remark}

\section{Analysis of the Linear-Quadratic Case}
\label{sec:lq_analysis}

To gain deeper insight into the effects of gradient uncertainty, we now specialize our analysis to the canonical linear-quadratic (LQ) setting. Here, the dynamics are linear and the costs are quadratic.
\begin{align*}
    dX_t &= (Ax_t + Bu_t)dt + \Sigma dB_t, \\
    L(x,u) &= x^T Q x + u^T R u,
\end{align*}
where $A \in \R^{n \times n}$, $B \in \R^{n \times k}$, $\Sigma \in \R^{n \times m}$ are constant matrices. We assume $Q \in \Sspace$ is symmetric and positive semidefinite ($Q \ge 0$) and $R \in \mathcal{S}^k$ is symmetric and positive definite ($R > 0$). The control set is $\U = \R^k$.

\begin{assumption}[Stabilizability and Detectability] \label{ass:lq_stabilizability}
For the LQ problem, we assume that the pair $(A,B)$ is stabilizable and the pair $(A, Q^{1/2})$ is detectable.
\end{assumption}
These standard assumptions ensure the existence of a unique stabilizing solution to the relevant algebraic Riccati equation.

\subsection{Failure of the Quadratic Ansatz}
In the classical robust LQ framework ($\epsilon=0$), it is a celebrated result that the value function is a quadratic function of the state, $V(x) = x^T P x + c$. The matrix $P$ is found by solving an Algebraic Riccati Equation (ARE). We now show that this fundamental property is destroyed by the introduction of gradient uncertainty.

\begin{proposition} \label{prop:failure_quadratic}
For any $\epsilon > 0$ and any non-degenerate problem data, the value function $V(x)$ of the LQ problem with gradient uncertainty is not a quadratic function of the state $x$.
\end{proposition}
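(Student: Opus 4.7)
I would proceed by contradiction combined with an asymptotic scaling analysis. Assume $V(x) = x^T P x + c$ for some symmetric $P$ and constant $c \in \R$. Substituting $\nabla V = 2Px$ and $D^2 V = 2P$ into the reduced equation~\eqref{eq:hjbi_gu_reduced}, isolate the contribution of the gradient-uncertainty supremum as
$$\Psi(w) := \sup_{\|\delta\|\le\epsilon}\Bigl\{ w^T\delta + \tfrac{\eta}{2}\delta^T S\delta \Bigr\}, \qquad S := \Sigma\Sigma^T,$$
with $w(x,u) := (A + 2\eta SP)x + Bu$. The PDE then collapses to the pointwise identity
$$\rho(x^T P x + c) \;=\; x^T Q x + 2x^T P A x + 2\eta x^T PSP x + \Tr(SP) + \inf_u\bigl\{ u^T R u + 2 u^T B^T P x + \Psi(w(x,u))\bigr\},$$
required to hold for every $x \in \R^n$.

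The core of the argument is an asymptotic expansion along rays $x = ty$, $y \in \R^n$ fixed, as $t \to \infty$. Since $\Psi$ is the supremum of a strictly convex quadratic over a compact ball, the envelope theorem gives $\nabla\Psi(w) = \delta^\ast(w)$ with $\|\delta^\ast(w)\| \le \epsilon$, whence $\Psi$ is globally $\epsilon$-Lipschitz and admits the clean asymptote $\Psi(w) = \epsilon\|w\| + O(1)$ as $\|w\| \to \infty$. Decomposing the minimizer as $u^\ast(t,y) = t\tilde u(y) + u_1(t,y)$ with $u_1$ bounded in $t$, and expanding the inner objective in powers of $t$, the $O(t^2)$ minimization forces the classical linear feedback $\tilde u(y) = -R^{-1} B^T P y$. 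The critical step is that at this $\tilde u(y)$, the coefficient of $u_1$ in the $O(t)$ term equals $2(R\tilde u(y) + B^T P y) = 0$, so the $O(t)$ contribution is independent of $u_1$ and is exactly $\epsilon \|\tilde w(y)\|$, where $\tilde w(y) := (A + 2\eta SP - B R^{-1} B^T P)y$ is the robust closed-loop drift.

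Matching powers of $t$ in the identity then yields: at $O(t^2)$, the classical robust algebraic Riccati equation $\rho P = Q + PA + A^T P + 2\eta PSP - P B R^{-1} B^T P$; at $O(t)$, the LHS is zero while the RHS carries $\epsilon\|\tilde w(y)\|$. Since this must hold for every $y$, one would need the matrix identity $A + 2\eta SP = B R^{-1} B^T P$. Substituting this back into the Riccati equation collapses it to the purely linear relation $(\rho I - A^T) P = Q$. The system ``Riccati $+$ matrix identity'' is overdetermined and, under Assumption~\ref{ass:lq_stabilizability} together with $\Sigma \neq 0$, $\rho > 0$, admits no symmetric solution for generic data: in particular, stabilizability of $(A,B)$ is incompatible with the identity $A - B R^{-1} B^T P = -2\eta SP$, whose right-hand side is forced to be negative semidefinite by $P \succeq 0$ and $S \succeq 0$, dictating that the closed-loop spectrum reduce to that of $-2\eta SP$ — a highly non-generic locus. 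This contradiction completes the proof.

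\textbf{Main obstacle.} The delicate technical point is controlling the asymptotic expansion of $u^\ast(t,y)$ uniformly in $y$ on the unit sphere: the non-smoothness of $\Psi$ at $w = 0$ must be handled, though this is harmless provided $\tilde w(y) \ne 0$, which the ``non-degeneracy'' hypothesis essentially guarantees. The strict convexity of the inner objective (via $R \succ 0$) combined with the global Lipschitz continuity of $\Psi$ delivers the required parametric stability and the boundedness of $u_1(t,y)$ in $t$, reducing the expansion to routine convex-analytic estimates rather than a subtle PDE argument.
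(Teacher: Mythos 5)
Your proposal is correct in its essentials and takes a genuinely different, and in one respect stronger, route than the paper. The paper substitutes the quadratic ansatz into the small-$\epsilon$ approximation \eqref{eq:hjbi_approx} (via Proposition \ref{prop:hamiltonian_expansion}) and argues a functional mismatch: the left side is a quadratic polynomial while the right side contains $\epsilon\norm{Mx}$ with $M = A - BR^{-1}B^TP + 2\eta\Sigma\Sigma^T P$, which is non-polynomial whenever $M\neq 0$; the possibility $M=0$ is absorbed into the ``non-degenerate data'' hypothesis, and the full equation is only addressed in a remark. You instead work with the exact reduced equation \eqref{eq:hjbi_gu_reduced}, using the exact two-sided bound $\epsilon\norm{w}\le\Psi(w)\le\epsilon\norm{w}+\tfrac{\eta\epsilon^2}{2}\norm{S}_{op}$ and ray asymptotics $x=ty$: matching at $O(t^2)$ recovers the robust ARE \eqref{eq:are}, and matching at $O(t)$ (after the cancellation $2u_1^T(R\tilde u+B^TPy)=0$ at $\tilde u=-R^{-1}B^TPy$, which is correct) forces $\epsilon\norm{\tilde w(y)}=0$ for every $y$, i.e.\ $M=0$. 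This buys a proof valid for all $\epsilon>0$ directly on the exact equation, with no appeal to the $O(\epsilon^2)$ truncation, and it goes further than the paper by actually analyzing the degenerate branch $M=0$, correctly showing it collapses the ARE to $(\rho I-A^T)P=Q$ together with $BR^{-1}B^TP=A+2\eta SP$, an overdetermined system excluded for generic data. The technical caveats you flag (uniformity of the $o(t)$ error in the infimum via $R\succ0$ and Danskin-type stability; non-smoothness of $\Psi$ at $w=0$ being harmless when $\tilde w(y)\neq0$) are the right ones and are routine to discharge.

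One sub-claim in your last step is wrong, though it does not sink the argument: you assert that stabilizability of $(A,B)$ is incompatible with $A-BR^{-1}B^TP=-2\eta SP$ because the right-hand side is ``negative semidefinite.'' The matrix $SP$ is not symmetric in general, and its spectrum coincides with that of $S^{1/2}PS^{1/2}\succeq0$, so with $S>0$ and $P>0$ the matrix $-2\eta SP$ is in fact Hurwitz -- the identity is perfectly compatible with a stabilizing closed loop, so no contradiction with stabilizability arises from it. The defensible conclusion is the overdeterminacy/genericity one you also state, which lands exactly where the paper's own ``non-degenerate problem data'' hypothesis does; if you present the proof, drop the incompatibility claim and phrase the exclusion of $M=0$ as a non-degeneracy condition on $(A,B,R,Q,\Sigma,\eta,\rho)$.
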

\begin{proof}
We proceed by contradiction. Assume the value function has the quadratic form $V(x) = x^T P x + c$ for some symmetric matrix $P \in \Sspace$ and constant $c \in \R$. The gradient is $\nablaV(x) = 2Px$ and the Hessian is $\HessV(x) = 2P$. Substituting this ansatz into the full GU-HJBI equation \eqref{eq:hjbi_gu_reduced} is complex. However, using the first-order expansion from Proposition \ref{prop:hamiltonian_expansion} is sufficient to reveal the structural mismatch. Substituting the quadratic ansatz into the approximate GU-HJBI equation \eqref{eq:hjbi_approx} yields:
\begin{equation} \label{eq:ansatz_pde_lq_approx}
\begin{split}
    \rho (x^T P x + c) = \inf_{u \in \R^k} \bigg\{ & x^T Q x + u^T R u + (2Px)^T(Ax+Bu) + \frac{1}{2}\Tr(\Sigma\Sigma^T(2P)) \\
    & + \frac{\eta}{2}\norm{\Sigma^T (2Px)}^2 + \epsilon\norm{Ax+Bu + 2\eta\Sigma\Sigma^T Px} \bigg\}.
\end{split}
\end{equation}
The optimal control $u$ that minimizes the sum of quadratic terms is linear in $x$: $u^*(x) = -R^{-1}B^T P x$. Let us substitute this into the equation. The left-hand side (LHS) is a quadratic polynomial in $x$. The right-hand side (RHS) consists of several terms. All terms are quadratic in $x$ except for the last one:
\[ \epsilon\norm{Ax - B(R^{-1}B^T P x) + 2\eta\Sigma\Sigma^T Px} = \epsilon \norm{(A - BR^{-1}B^TP + 2\eta\Sigma\Sigma^T P)x}. \]
This term is of the form $\epsilon \norm{M x}$ for a constant matrix $M$. For the equality to hold for all $x \in \R^n$, the functional form of both sides of the equation must match. A quadratic polynomial (the LHS) cannot be identically equal to the sum of a quadratic polynomial and a non-polynomial term (the RHS) over the entire space $\R^n$, unless the non-polynomial term is zero. Since $\norm{Mx}$ is not a quadratic polynomial for any non-zero matrix $M$, this equality cannot hold unless $\epsilon = 0$. This contradicts the assumption that $\epsilon > 0$. Therefore, the initial ansatz that $V(x)$ is quadratic must be false.
\end{proof}
\begin{remark}
The argument above, based on the first-order expansion in $\epsilon$, is sufficient to prove the result. The conclusion holds even more strongly for the full GU-HJBI equation, as the term $\sup_{\norm{\delta}\le\epsilon} \Phi(\delta)$ is generally not a quadratic function of $x$ (via its dependence on $p=2Px$), further reinforcing the structural mismatch.
\end{remark}

This result is profound. It implies that even for the simplest class of control problems, gradient uncertainty introduces a fundamental nonlinearity that cannot be handled by classical Riccati-based methods.

\subsection{Perturbation Analysis for Small Epsilon}
Since a closed-form solution is unavailable, we turn to perturbation analysis to understand the structure of the solution for small $\epsilon > 0$. We suggest an expansion for the value function and the optimal control:
\begin{align*}
V(x) &= V_0(x) + \epsilon V_1(x) + \epsilon^2 V_2(x) + O(\epsilon^3) \\
u^*(x) &= u_0(x) + \epsilon u_1(x) + \epsilon^2 u_2(x) + O(\epsilon^3)
\end{align*}
We substitute these expansions into the approximate GU-HJBI equation \eqref{eq:hjbi_approx} and collect terms of like powers in $\epsilon$. The rigor of this formal procedure can be established using the Implicit Function Theorem in appropriate function spaces, as sketched in Appendix \ref{app:perturb_rigor}.

\paragraph{Zeroth-Order Problem (Terms of order $\epsilon^0$).}
Setting $\epsilon=0$ recovers the standard robust LQ problem. The equation for $V_0$ is:
\[
\rho V_0 = \inf_{u} \left\{ x^TQx + u^TRu + \nablaV_0^T(Ax+Bu) + \frac{\eta}{2}\norm{\Sigma^T \nablaV_0}^2 + \frac{1}{2}\Tr(\Sigma\Sigma^T \HessV_0) \right\}.
\]
As is standard, we set the ansatz $V_0(x) = x^T P_0 x + c_0$. Then $\nablaV_0(x) = 2P_0x$ and $\HessV_0(x) = 2P_0$. The optimal control is found from the first-order condition on $u$:
\[ 2Ru + B^T \nablaV_0 = 0 \implies u_0(x) \coloneqq -R^{-1}B^T P_0 x. \]
Substituting $V_0$ and $u_0$ back in and equating coefficients of the quadratic forms in $x$ yields the celebrated \textbf{Robust Algebraic Riccati Equation (ARE)} for the symmetric matrix $P_0$:
\begin{equation} \label{eq:are}
    A^T P_0 + P_0 A + Q - P_0 B R^{-1} B^T P_0 + 2\eta P_0 \Sigma\Sigma^T P_0 = \rho P_0.
\end{equation}
The constant term yields $c_0 = \frac{1}{\rho}\Tr(\Sigma\Sigma^T P_0)$. Under Assumption \ref{ass:lq_stabilizability}, this ARE has a unique symmetric positive definite solution $P_0$ that results in a stable closed-loop system (e.g., \cite{BasarOlsder1999}).
The zeroth-order closed-loop dynamics matrix is stable and is given by
\begin{equation} \label{eq:acl}
    A_{cl} \coloneqq A - B R^{-1} B^T P_0.
\end{equation}
The full  drift for the zeroth-order problem, including the robust term, is $\Aeff \coloneqq A_{cl} + 2\eta\Sigma\Sigma^T P_0$.

\paragraph{First-Order Problem (Terms of order $\epsilon^1$).}
We now collect terms of order $\epsilon^1$. This involves linearizing the HJB operator around the zeroth-order solution $(V_0, u_0)$ and adding the explicit $\epsilon$-order source term from the gradient uncertainty penalty. The resulting equation for the first-order correction $V_1$ is a linear PDE.

The source term, $H_1(x)$, is the coefficient of $\epsilon$ in the expansion of the full Hamiltonian, evaluated at the zeroth-order solution ($p=\nablaV_0, u=u_0$):
\begin{align*}
H_1(x) &\coloneqq \norm{f(x,u_0(x)) + \eta\Sigma\Sigma^T \nablaV_0(x)} \\
&= \norm{Ax + B(-R^{-1}B^T P_0 x) + 2\eta\Sigma\Sigma^T (P_0 x)} \\
&= \norm{(A - BR^{-1}B^T P_0 + 2\eta\Sigma\Sigma^T P_0)x} = \norm{\Aeff x}.
\end{align*}
The final PDE for the first-order correction $V_1(x)$ is a Lyapunov-like equation:
\begin{equation} \label{eq:v1_pde}
    \rho V_1(x) = (\nabla V_1(x))^T (\Aeff x) + \frac{1}{2}\Tr(\Sigma\Sigma^T \HessV_1(x)) + \norm{\Aeff x}.
\end{equation}
This is a second-order linear PDE for $V_1(x)$. Its generator corresponds to an Ornstein-Uhlenbeck process with the stable drift matrix $\Aeff$. The Feynman-Kac formula gives its solution as an expectation:
\[ V_1(x) = \E_x \left[ \int_0^\infty e^{-\rho t} \norm{\Aeff Z_t} dt \right], \quad \text{where } dZ_t = \Aeff Z_t dt + \Sigma dB_t, \quad Z_0 = x. \]
The crucial observation is that the source term, $\norm{\Aeff x}$, is a norm of a linear function of $x$, which is generally not a polynomial. Consequently, the solution $V_1(x)$ to this linear PDE will, in general, be a non-polynomial function of $x$.

The first-order correction to the optimal control law is found by linearizing the optimality condition for $u$:
\[ u_1(x) \coloneqq -R^{-1}B^T \nabla V_1(x). \]
Since $V_1(x)$ is non-polynomial, its gradient $\nabla V_1(x)$ will be a nonlinear function of $x$. This means the optimal control law becomes nonlinear, a direct consequence of the agent's uncertainty.

\paragraph{Second-Order Problem (Terms of order $\epsilon^2$).}
Collecting terms of order $\epsilon^2$ reveals further complexity. The equation for $V_2(x)$ will take the form:
\begin{equation} \label{eq:v2_pde}
    \rho V_2(x) = \Llin[V_2](x) + H_2(x),
\end{equation}
where $\Llin[W](x) = (\nabla W(x))^T (\Aeff x) + \frac{1}{2}\Tr(\Sigma\Sigma^T \HessV W(x))$ is the same linear operator as before. The source term $H_2(x)$ is more complex and arises from several sources detailed in Appendix \ref{app:second_order}. In summary, $H_2(x)$ includes non-polynomial functions of $x$ and its gradient, such as terms proportional to $u_1^T R u_1 = (\nabla V_1)^T B R^{-1} B^T \nabla V_1$ and terms from the second-order expansion of the norm penalty. The key takeaway is that the complexity introduced by the non-polynomial nature of $V_1$ propagates and magnifies at higher orders. Solving for $V_2$ would require solving another linear PDE, but with an even more complicated source term derived from the solution for $V_1$.

\section{Numerical Analysis}
\label{sec:numerical_studies}

To make our theoretical findings concrete and to validate the predictions of the perturbation analysis, we present a series of numerical examples. We first analyze the 1D case in detail, including a sensitivity analysis of the key uncertainty parameters. We then present a 2D case to illustrate the richer geometric structure of the solution in higher dimensions. For all examples, the linear PDEs for the perturbation correction terms are solved using standard numerical methods (finite differences or finite elements), as detailed in Appendix \ref{app:numerical_methods}.

\subsection{One-Dimensional LQ Problem}

\subsubsection{Problem Setup}
We begin with a scalar system ($n=k=m=1$) governed by:
\begin{align*}
    dX_t &= (ax_t + bu_t)dt + \sigma dB_t, \quad L(x,u) = qx^2 + ru^2
\end{align*}
We choose the following baseline parameters, where $a=0.5$ renders the open-loop system unstable, creating a non-trivial control problem.
\begin{center}
\begin{tabular}{llc}
\toprule
\textbf{Parameter} & \textbf{Description} & \textbf{Value} \\
\midrule
$a$ & System drift & 0.5 \\
$b$ & Control effectiveness & 1.0 \\
$\sigma$ & Volatility & 1.0 \\
$q$ & State cost & 1.0 \\
$r$ & Control cost & 1.0 \\
$\rho$ & Discount factor & 0.1 \\
$\eta$ & Model uncertainty parameter & 0.2 \\
\bottomrule
\end{tabular}
\end{center}

\subsubsection{Zeroth and First-Order Solutions}
The Robust ARE \eqref{eq:are} becomes a scalar quadratic equation: $2ap_0 + q - (b^2/r)p_0^2 + 2\eta\sigma^2 p_0^2 = \rho p_0$. For the baseline parameters, this yields $0.6p_0^2 - 0.9p_0 - 1 = 0$. The unique positive, stabilizing root is $p_0 \approx 2.264$. This gives a zeroth-order linear control law $u_0(x) = -(b/r)p_0 x = -2.264x$.

The effective drift for the $V_1$ PDE is $a_{\mathrm{eff},0} = a - (b^2/r)p_0 + 2\eta\sigma^2 p_0 \approx -0.8584$. The PDE for the first-order correction $V_1(x)$ is therefore:
\begin{equation} \label{eq:v1_pde_1d}
    \rho V_1(x) = a_{\mathrm{eff},0} x V_1'(x) + \frac{1}{2}\sigma^2 V_1''(x) + |a_{\mathrm{eff},0}x|.
\end{equation}
We solve this second-order linear ODE numerically using a finite difference scheme on a bounded domain.

\subsubsection{Results and Interpretation}
Figure \ref{fig:1d_results} presents the numerically computed value function and optimal control law for a gradient uncertainty level of $\epsilon=0.5$. The results provide a clear visual confirmation of our primary theoretical predictions.

The left panel shows that the total value function, approximated as $V_0(x) + \epsilon V_1(x)$ (blue solid line), visibly deviates from the purely quadratic base solution $V_0(x)$ (red dashed line). This departure from the quadratic form, predicted in Proposition \ref{prop:failure_quadratic}, is a direct result of the non-polynomial correction term $V_1(x)$. The V-shape of the underlying source term $|a_{\mathrm{eff},0}x|$ induces a non-quadratic component in the solution, which is most pronounced for states away from the origin.

The right panel illustrates the consequence for the control policy. The total optimal control law, $u^*(x) \approx u_0(x) + \epsilon u_1(x)$, is clearly nonlinear. The correction $u_1(x) = -(b/r)V_1'(x)$ is derived from the gradient of the non-polynomial function $V_1(x)$, thus breaking the linearity of the classical LQ regulator. The control becomes more aggressive (steeper) than the linear law for states far from the origin, which is precisely where the drift sensitivity and thus the potential impact of gradient uncertainty are largest. The small, high-frequency oscillations are numerical artifacts from the finite-difference gradient computation and do not alter the fundamental nonlinear character of the solution.

\begin{figure}[ht!]
    \centering
    \includegraphics[width=\linewidth]{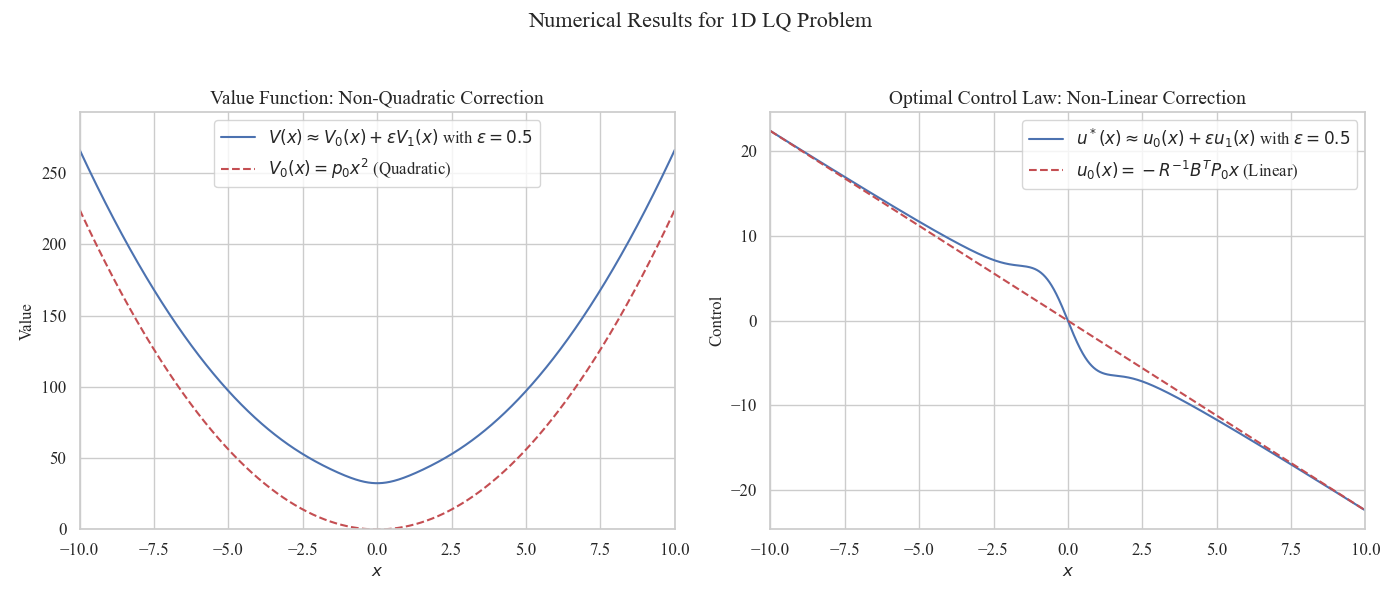} 
    \caption{Numerically computed value function and control law for the 1D LQ problem with $\epsilon=0.5$. (Left) The total value function (blue) visibly deviates from the purely quadratic $V_0$ (red dashed), confirming the failure of the quadratic ansatz. (Right) The optimal control law (blue) is nonlinear, in contrast to the linear law $u_0$ (red dashed).}
    \label{fig:1d_results}
\end{figure}

\subsubsection{Sensitivity Analysis}
We now investigate how the solution structure changes with the model uncertainty parameter $\eta$ and the gradient uncertainty parameter $\epsilon$. The results are shown in Figure \ref{fig:sensitivity}. The left panel illustrates the effect of $\eta$ on the first-order control correction, $u_1(x)$. Increasing the agent's concern for model misspecification (larger $\eta$) leads to a more robust zeroth-order solution $p_0$, which in turn makes the effective drift $a_{\mathrm{eff},0}$ more negative (i.e., the system becomes more stable). This amplifies the magnitude of the source term $|a_{\mathrm{eff},0}x|$ in the PDE for $V_1$, resulting in a larger control correction $u_1(x)$.

The right panel shows the effect of $\epsilon$ on the total value function $V(x)$. As expected, the overall cost increases as the agent becomes more concerned about gradient uncertainty (larger $\epsilon$). The parameter $\epsilon$ directly scales the price of robustness against this internal uncertainty. For $\epsilon=0$, we recover the standard robust quadratic value function $V_0(x)$. For $\epsilon > 0$, the agent anticipates a worse outcome and thus incurs a higher cost.

\begin{figure}[ht!]
    \centering
    \includegraphics[width=\linewidth]{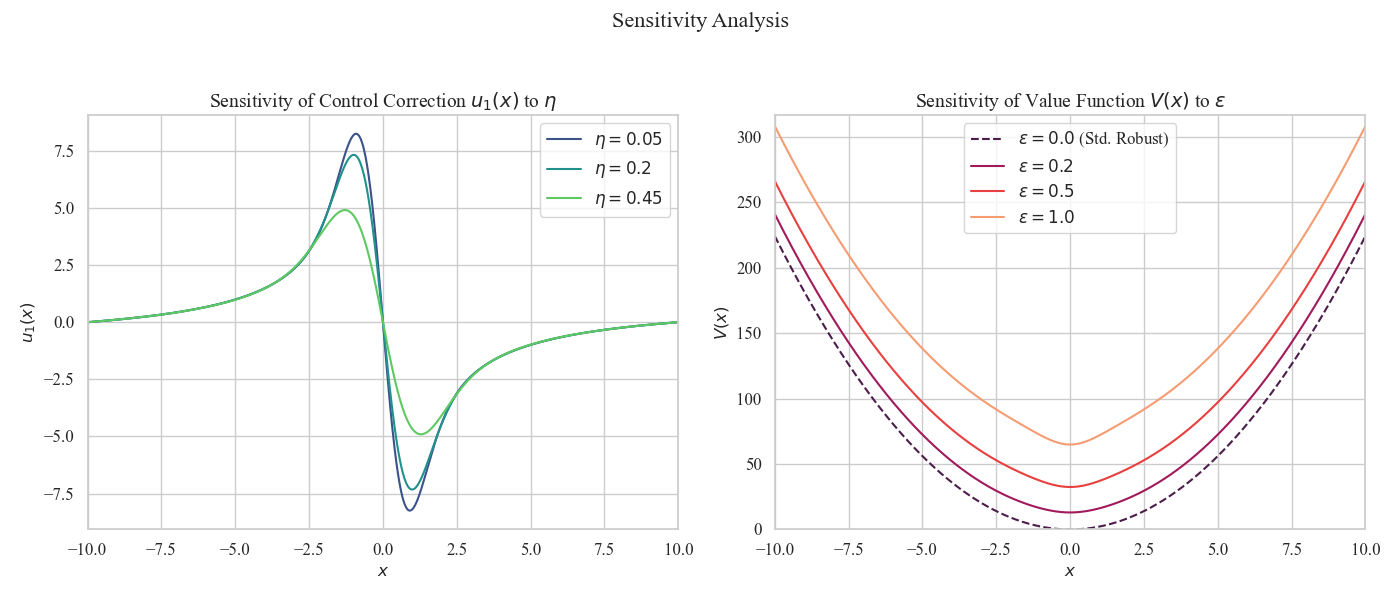} 
    \caption{Sensitivity Analysis for the 1D LQ problem. (Left) Increasing the model uncertainty parameter $\eta$ increases the magnitude of the nonlinear control correction $u_1(x)$. (Right) The total value function increases with the gradient uncertainty parameter $\epsilon$, reflecting the cost of robustness.}
    \label{fig:sensitivity}
\end{figure}

\subsection{Two-Dimensional LQ Problem}
To demonstrate the framework in a higher-dimensional setting, we consider a 2D problem ($n=2, k=m=2$) with parameters:
\begin{align*}
    A &= \begin{pmatrix} 0.2 & 0.1 \\ -0.1 & 0.3 \end{pmatrix}, B = I, \Sigma = 0.5 I, \\
    Q &= I, R = I, \rho = 0.1, \eta = 0.1.
\end{align*}
The zeroth-order solution $V_0(x) = x^T P_0 x$ is found by solving the $2 \times 2$ Riccati equation \eqref{eq:are}. The first-order correction $V_1(x)$ solves the 2D linear PDE \eqref{eq:v1_pde}, which we solve using a finite difference method on a uniform grid. The results are shown in Figure \ref{fig:2d_results}.

The left panel shows a contour plot of the value function correction $V_1(x)$. The level sets are not elliptical, which would be the case for a quadratic function. Instead, their shape is determined by the level sets of the source term $\norm{\Aeff x}$, which are norm-balls of the matrix $\Aeff$, smoothed by the diffusion operator. This rounded-square geometry is a direct visualization of the non-quadratic nature of the solution in two dimensions.

The right panel shows the vector field of the control correction $u_1(x) = -R^{-1}B^T\nabla V_1(x)$. This represents a nonlinear warping of the baseline linear control field $u_0(x)$. The correction vectors push the system state more strongly towards the origin, particularly along directions where the source term $\norm{\Aeff x}$ is largest, providing a geometrically rich picture of the nonlinear robust control strategy.

\begin{figure}[ht!]
    \centering
    \includegraphics[width=\linewidth]{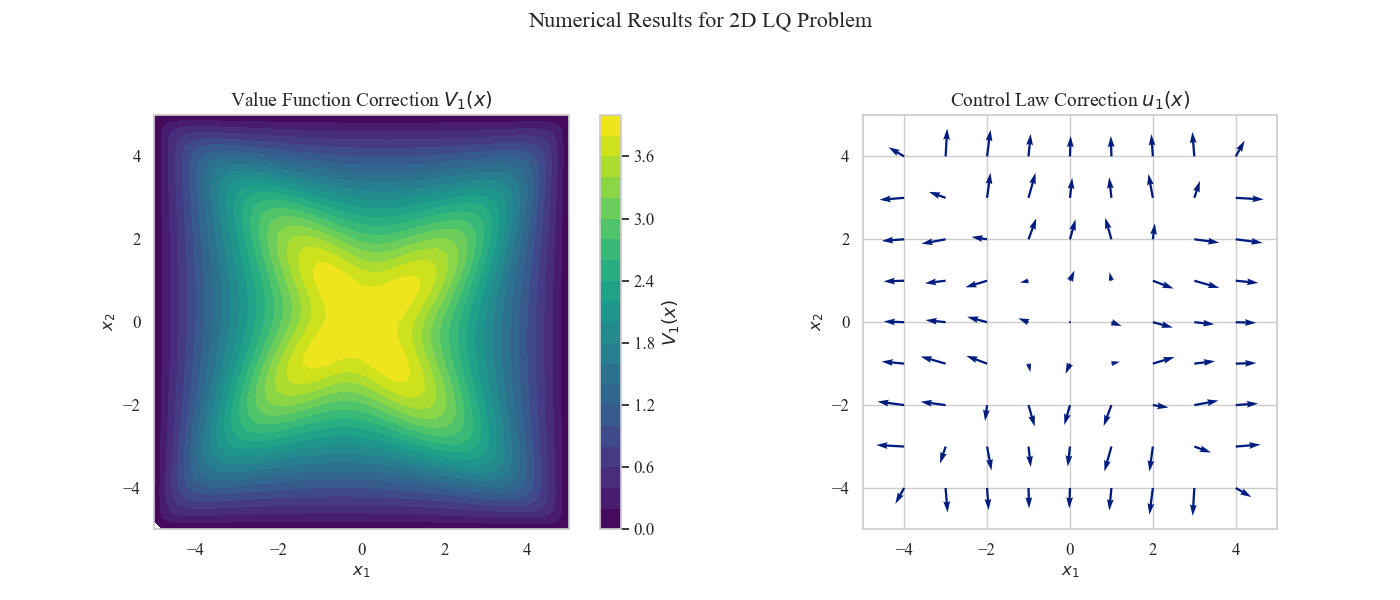} 
    \caption{Numerical results for the 2D LQ problem. (Left) A contour plot of the value function correction $V_1(x)$. The non-elliptical contours reflect the non-quadratic nature shaped by the norm of the drift matrix, $\norm{\Aeff x}$. (Right) A vector field plot of the control correction $u_1(x)$. The vectors show the nonlinear adjustment to the control law.}
    \label{fig:2d_results}
\end{figure}

\subsection{Discussion on Numerical Challenges}
The 2D example is tractable, but it highlights the challenges of solving the GU-HJBI equation in higher dimensions. Solving the linear PDE for $V_1$ on a grid becomes computationally infeasible for $n > 3$ or $4$ due to the curse of dimensionality. The number of grid points grows as $N^n$, making standard methods like finite differences or finite elements intractable. This motivates the need for advanced numerical techniques. A promising direction is the use of mesh-free methods based on neural network representations of the solution, such as Physics-Informed Neural Networks (see \cite{raissi2019physics}) or the Deep Galerkin Method (see \cite{sirignano2018dgm}), and especially most recently, Neural Hamiltonian Operator (see \cite{qi2025neuralhamiltonianoperator}) for practical applications, as discussed in the conclusion.

\section{Uncertainty Geometry and Interpretation}
\label{sec:geometry_and_entropy}

The choice of the uncertainty set $\Deltae$ is a crucial modeling decision. We now analyze how different geometries for this set affect the resulting penalty and discuss the broader interpretation of our framework.

\subsection{A Comparative Analysis of Uncertainty Structures}
The isotropic $\ell_2$-norm ball is natural but not the only choice. We consider two important alternatives.

\paragraph{Box Uncertainty ($\ell_\infty$-norm).} This corresponds to component-wise bounds on the gradient error.
\begin{definition}[Box Uncertainty Set]
The set of admissible gradient perturbations is the closed $\ell_\infty$-ball of radius $\epsilon$:
\[
\Delta_\epsilon^{(\infty)} \coloneqq \{ \delta \in \R^n \mid \norm{\delta}_\infty \leq \epsilon \} = \{ \delta \in \R^n \mid |\delta_i| \le \epsilon \text{ for all } i=1,\dots,n \}.
\]
\end{definition}

\paragraph{Quadratic Form Uncertainty (Mahalanobis Distance).} This models correlated uncertainty in the gradient components, where the matrix $M$ is positive definite.
\begin{definition}[Quadratic Form Uncertainty Set]
The set of admissible gradient perturbations is the ellipsoid defined by:
\[
\Delta_\epsilon^{(M)} \coloneqq \{ \delta \in \R^n \mid \delta^T M \delta \leq \epsilon^2 \}.
\]
\end{definition}

\begin{proposition}[Hamiltonian Expansions for Different Geometries] \label{prop:G_geometries}
Let $p = \nablaV(x)$ and $v = f + \eta\sigma\sigma^T p$. For small $\epsilon > 0$, the first-order correction to the robust Hamiltonian for different uncertainty geometries is:
\begin{enumerate}
    \item \textbf{$\ell_2$ Uncertainty:} $\epsilon \norm{v}_2$
    \item \textbf{$\ell_\infty$ Uncertainty:} $\epsilon \norm{v}_1$
    \item \textbf{Quadratic Form ($M$) Uncertainty:} $\epsilon \sqrt{v^T M^{-1} v} = \epsilon \norm{v}_{M^{-1}}$
\end{enumerate}
\end{proposition}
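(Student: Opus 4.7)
The plan is to imitate the argument behind Proposition~\ref{prop:hamiltonian_expansion}: for each of the three geometries I would isolate the piece of the inner objective that is linear in $\delta$ and identify its maximum over the uncertainty set as the support function of that set evaluated at $v = f(x,u) + \eta\sigma(x,u)\sigma(x,u)^T p$. All three cases then reduce to well-known support-function computations, and the duality between the norm defining the uncertainty set and the norm appearing in the penalty becomes transparent.

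First, I would record the exact algebraic identity
\[
\Phi(\delta) \coloneqq (p+\delta)^T f(x,u) + \frac{\eta}{2}\norm{\sigma(x,u)^T(p+\delta)}^2 = \Phi(0) + \delta^T v + \frac{\eta}{2}\delta^T \sigma(x,u)\sigma(x,u)^T \delta,
\]
which holds without any approximation. Because all norms on $\R^n$ are equivalent, each of $\Delta_\epsilon^{(\infty)}$ and $\Delta_\epsilon^{(M)}$ is contained in a Euclidean ball of radius $C\epsilon$, where $C$ depends only on the geometry (and, for the ellipsoid, on the smallest eigenvalue of $M$). Hence the quadratic term is bounded by $\frac{\eta}{2} C^2 \epsilon^2 \lambda_{\max}(\sigma\sigma^T)$ uniformly in $\delta$, and a two-sided squeeze yields
\[
\sup_{\delta \in \Delta_\epsilon^{(\cdot)}} \Phi(\delta) = \Phi(0) + \epsilon\, s_K(v) + O(\epsilon^2),
\]
where $s_K(v) \coloneqq \sup_{\delta \in K} \delta^T v$ is the support function of the unit-scale set $K$ obtained by dividing $\Delta_\epsilon^{(\cdot)}$ by $\epsilon$. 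The lower bound comes from plugging $\delta = \epsilon\delta^*$ for a maximiser $\delta^*$ of the linear form (the quadratic term is nonnegative); the upper bound from the uniform $O(\epsilon^2)$ control of the quadratic term combined with $\delta^T v \le \epsilon\, s_K(v)$ on $\epsilon K$.

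Next I would evaluate the three support functions. For $K = \{\norm{\delta}_2 \le 1\}$, Cauchy--Schwarz gives $s_K(v) = \norm{v}_2$, recovering the $\ell_2$ formula already established. For $K = \{\norm{\delta}_\infty \le 1\}$ the maximisation decouples coordinate-wise and is attained at $\delta_i = \mathrm{sgn}(v_i)$, giving $s_K(v) = \sum_{i=1}^n |v_i| = \norm{v}_1$. For $K = \{\delta^T M \delta \le 1\}$ the change of variables $\tilde\delta = M^{1/2}\delta$ carries $K$ onto the Euclidean unit ball and transforms the linear form into $\tilde\delta^T M^{-1/2} v$; its Euclidean supremum equals $\norm{M^{-1/2} v}_2 = \sqrt{v^T M^{-1} v}$. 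Multiplying by $\epsilon$ reproduces the three claimed first-order corrections.

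The main obstacle is essentially bookkeeping rather than analysis. The two slightly delicate points are: handling the degenerate case $v = 0$, where both $s_K(v)$ and the first-order correction vanish so that the expansion is trivially consistent; and propagating the constants $C$ uniformly in $(x,u)$ so that the $O(\epsilon^2)$ remainder survives the outer infimum over $u \in \U$ when this Hamiltonian expansion is inserted into the GU-HJBI equation. Under Assumption~\ref{ass:standard} and the compactness of $\U$, $f$ and $\sigma$ are uniformly bounded on compact $x$-sets, so both points are routine.
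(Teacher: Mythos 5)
Your proposal is correct and follows essentially the same route as the paper's proof: isolate the term linear in $\delta$, bound the quadratic remainder by $O(\epsilon^2)$, and identify the first-order correction as the support function of the uncertainty set evaluated at $v$, computed case by case. The only cosmetic differences are that you treat the Mahalanobis case by the change of variables $\tilde\delta = M^{1/2}\delta$ rather than the paper's Lagrange-multiplier computation, and you spell out (via norm equivalence) the uniform $O(\epsilon^2)$ control of the quadratic term for the non-Euclidean sets, a point the paper leaves implicit by citing its $\ell_2$ argument.
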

\begin{proof}
The proof for the $\ell_2$ case is in Prop. \ref{prop:hamiltonian_expansion}. The others follow from the classic duality between norms. For the $\ell_\infty$ case, $\sup_{\norm{\delta}_\infty \le \epsilon} v^T\delta = \epsilon\norm{v}_1$. For the quadratic form case, $\sup_{\delta^T M \delta \le \epsilon^2} v^T\delta = \epsilon \sqrt{v^T M^{-1} v}$. See Appendix \ref{app:proof_G_geometries}.
\end{proof}

This reveals a beautiful duality: maximizing against an adversary constrained by a set defined by one norm (or quadratic form) introduces a penalty proportional to the dual norm of the sensitivity vector. Since for any vector $v$, $\norm{v}_1 \ge \norm{v}_2$, the box uncertainty model implies a more conservative agent than the spherical one. The quadratic form uncertainty allows the user to encode prior knowledge about the covariance of gradient estimation errors.

\subsection{Connection to Relative Entropy and Model Misspecification}

The standard robust penalty $\frac{\eta}{2}\|\sigma^T \nablaV\|^2$ has a clear interpretation via relative entropy (Kullback-Leibler divergence). It represents the cost of guarding against all alternative models within a certain statistical distance of the nominal model. Does our framework admit a similar interpretation?

The answer is no, and the distinction is fundamental. The adversary's choice of $\delta$ is not a choice of an alternative physical model; it is an attack on the agent's decision-making process itself.

\begin{remark}[Interpretation Beyond Relative Entropy]
The introduction of gradient uncertainty fundamentally alters the game. The problem is a two-level attack:
\begin{enumerate}
    \item \textbf{Model Uncertainty:} The adversary chooses an alternative model (via $h$) to exploit the agent's known sensitivity ($\nablaV$). This component has the standard entropy interpretation.
    \item \textbf{Valuation Uncertainty:} Simultaneously, the adversary chooses a perturbation $\delta$ to exploit the agent's ambiguity about that same sensitivity. This is an attack on the agent's internal valuation, not on the external physical model.
\end{enumerate}
Therefore, our framework models a more general and perhaps more realistic form of robustness that goes beyond penalizing statistical divergence to incorporate a direct penalty for ambiguity in the marginal value of states. This is a form of robustness against Knightian uncertainty applied to the agent's own solution.
\end{remark}

\section{A Bridge to Reinforcement Learning}
\label{sec:rl_connection}

The primary motivation for this work comes from the practical realities of Reinforcement Learning (RL), where value functions are learned from data and their gradients are necessarily approximate. We now make this connection explicit by proposing and empirically evaluating a concrete algorithm based on our theory.

\subsection{The Source of Gradient Uncertainty in Actor-Critic Methods}
In actor-critic RL, two function approximators are typically used:
\begin{itemize}
    \item \textbf{The Critic}, $Q_\theta(x,u)$, with parameters $\theta$, approximates the true state-action value function $Q(x,u)$. It is trained to minimize a temporal difference (TD) error.
    \item \textbf{The Actor}, $\pi_\phi(x)$, with parameters $\phi$, represents a deterministic policy $u = \pi_\phi(x)$. It is trained to produce actions that maximize the critic's estimate of the value, i.e., by moving in a direction suggested by $\nabla_u Q_\theta(x,u)$.
\end{itemize}
The key issue is that the critic $Q_\theta(x,u)$ is just an approximation. Its gradient with respect to the state, $\nabla_x Q_\theta(x,u)$, is therefore a noisy estimate of the true gradient. An overly aggressive actor might exploit spurious, large gradients in the critic, leading to unstable learning. Our framework provides a principled way to regularize this process.

\subsection{A Gradient-Uncertainty-Robust Actor-Critic (GURAC) Algorithm}
We propose a new actor-critic algorithm that incorporates a penalty inspired by our GU-HJBI equation into a modern actor-critic framework like TD3 (see \cite{fujimoto2018addressing}). The core idea is to modify the actor's objective to make it robust to perturbations in the critic's state-gradient. The theoretical penalty term is $\epsilon\norm{f(x, u) + \eta\sigma\sigma^T\nabla V}$. To translate this to a practical algorithm, we make the following correspondences:
\begin{itemize}
    \item The theoretical uncertainty level $\epsilon$ becomes a tunable regularization hyperparameter $\lambda_R \ge 0$.
    \item The true value function gradient $\nabla V(x)$ is approximated by the state-gradient of the learned critic $Q_\theta(x,u)$, evaluated at the current policy's action, i.e., $\nabla_x Q_\theta(x, \pi_\phi(x))$. This is justified by the envelope theorem, which states that for an optimal policy, $\nabla_x V(x) = \nabla_x Q(x, \pi^*(x))$.
\end{itemize}
This leads to the GURAC-TD3 algorithm, presented in Algorithm \ref{alg:gurac}. It retains the core components of TD3, clipped double Q-learning and delayed policy updates, while adding our novel regularization term to the actor's loss function.

\begin{algorithm}[H]
\caption{GURAC-TD3: Gradient-Uncertainty-Robust Actor-Critic}
\label{alg:gurac}
\begin{algorithmic}[1]
\State Initialize critic networks $Q_{\theta_1}, Q_{\theta_2}$ and actor network $\pi_\phi(x)$ with random parameters.
\State Initialize target networks $\theta_1' \leftarrow \theta_1, \theta_2' \leftarrow \theta_2, \phi' \leftarrow \phi$.
\State Initialize replay buffer $\mathcal{B}$ and GURAC hyperparameters $\lambda_R, \eta$.
\For{each timestep $t=1, \dots, T$}
    \State Select action with exploration noise: $u_t = \pi_\phi(x_t) + \mathcal{N}$.
    \State Execute action $u_t$, observe reward $r_t$ and new state $x_{t+1}$.
    \State Store transition $(x_t, u_t, r_t, x_{t+1})$ in $\mathcal{B}$.
    \State Sample a random minibatch of $N$ transitions $(x_i, u_i, r_i, x_{i+1})$ from $\mathcal{B}$.
    
    \State \textbf{Critic Update:}
    \State Compute target action: $\tilde{u}_{i+1} \leftarrow \pi_{\phi'}(x_{i+1}) + \text{clip}(\mathcal{N}', -c, c)$.
    \State Compute target Q-value: $y_i = r_i + \gamma \min_{j=1,2} Q_{\theta_j'}(x_{i+1}, \tilde{u}_{i+1})$.
    \State Update critics by minimizing MSE loss: $L_{critic,j} = \frac{1}{N}\sum_i (y_i - Q_{\theta_j}(x_i, u_i))^2$ for $j=1,2$.
    
    \If{$t \pmod{\text{policy\_delay}} = 0$}
        \State \textbf{Robust Actor Update:}
        \State Compute policy actions for batch states: $a_i = \pi_\phi(x_i)$.
        \State Compute critic state-gradients: $p_i = \nabla_x Q_{\theta_1}(x, a)|_{x=x_i, a=a_i}$.
        \State Define the robustness penalty using a drift estimate $f_{est}$ and noise model $\sigma$:
        \[ \text{Penalty}_i = \norm{f_{est}(x_i, a_i) + \eta\sigma\sigma^T p_i}. \]
        \State Update actor by minimizing the robust loss (gradient ascent on the objective):
        \[ L_{actor} = \frac{1}{N}\sum_i \left( -Q_{\theta_1}(x_i, a_i) + \lambda_R \cdot \text{Penalty}_i \right). \]
        
        \State \textbf{Update Target Networks:}
        \State $\theta_j' \leftarrow \tau\theta_j + (1-\tau)\theta_j'$ for $j=1,2$.
        \State $\phi' \leftarrow \tau\phi + (1-\tau)\phi'$.
    \EndIf
\EndFor
\end{algorithmic}
\end{algorithm}

\subsection{Experimental Design for GURAC Validation}
\label{subsec:experimental_design}

To empirically test the efficacy of our framework, we conducted experiments on the classic Pendulum-v1 continuous control task. We compared our GURAC-TD3 algorithm against a standard, state-of-the-art TD3 baseline (see \cite{fujimoto2018addressing}).

\paragraph{Implementation Details.} Both algorithms used identical network architectures (2 hidden layers of 256 units), learning rates, and other core TD3 hyperparameters. For GURAC-TD3, we set model uncertainty $\eta=0.1$ and the regularization weight $\lambda_R=0.01$. The environment dynamics $f$ and a small constant diffusion $\sigma=0.1I$ were assumed known for calculating the penalty term.

\paragraph{Evaluation Protocol.} To ensure statistical significance, each experiment was repeated across 10 random seeds. We evaluated two primary hypotheses:
\begin{enumerate}
    \item \textbf{(H1) Improved Learning Stability:} The GURAC regularizer will reduce learning variance and prevent performance collapses by penalizing exploitation of noisy critic gradients.
    \item \textbf{(H2) Enhanced Policy Robustness:} The final GURAC policy will be more robust to external perturbations, specifically unmodeled noise in the actuation channel.
\end{enumerate}
Learning stability was measured by plotting the average evaluation reward over 200,000 training steps. Policy robustness was tested by evaluating the final learned policies with varying levels of Gaussian noise added to their actions.

\subsection{Empirical Validation and Analysis}
\label{subsec:empirical_validation}

Our empirical results, generated according to the protocol in Section \ref{subsec:experimental_design}, provide strong evidence for the stabilizing effects of our proposed regularizer and offer nuanced insights into the nature of robustness in deep reinforcement learning. The findings are summarized in Figure \ref{fig:full_empirical_results}.

\paragraph{Learning Stability (H1).}
Figure \ref{fig:full_empirical_results}a presents the learning curves for both the GURAC-TD3 and baseline TD3 agents, averaged over 10 random seeds. The results clearly confirm our first hypothesis (H1). The GURAC-TD3 agent (orange line) exhibits a remarkably stable learning trajectory. After an initial learning phase, it converges to a high-performance policy and maintains it, evidenced by the tight confidence interval (the shaded region) around the mean evaluation reward. This indicates low variance across different experimental runs.

In sharp contrast, the baseline TD3 agent (blue line) demonstrates significant performance instability, a well-documented challenge in actor-critic methods. The wide confidence interval and the sharp, repeated collapses in the mean reward curve after reaching a performance peak are characteristic of this instability. The raw data logs confirm that this variance stems from the baseline agent's failure to converge on certain random seeds, while succeeding on others. The GURAC regularizer successfully prevents the actor from overfitting to spurious or noisy gradients from the critic, thus mitigating these collapses. These results provide strong empirical support for H1, demonstrating that our theoretically-grounded penalty is highly effective at stabilizing the training process.

\begin{figure}[ht!]
    \centering
    \begin{subfigure}[b]{\textwidth}
        \centering
        \includegraphics[width=\textwidth]{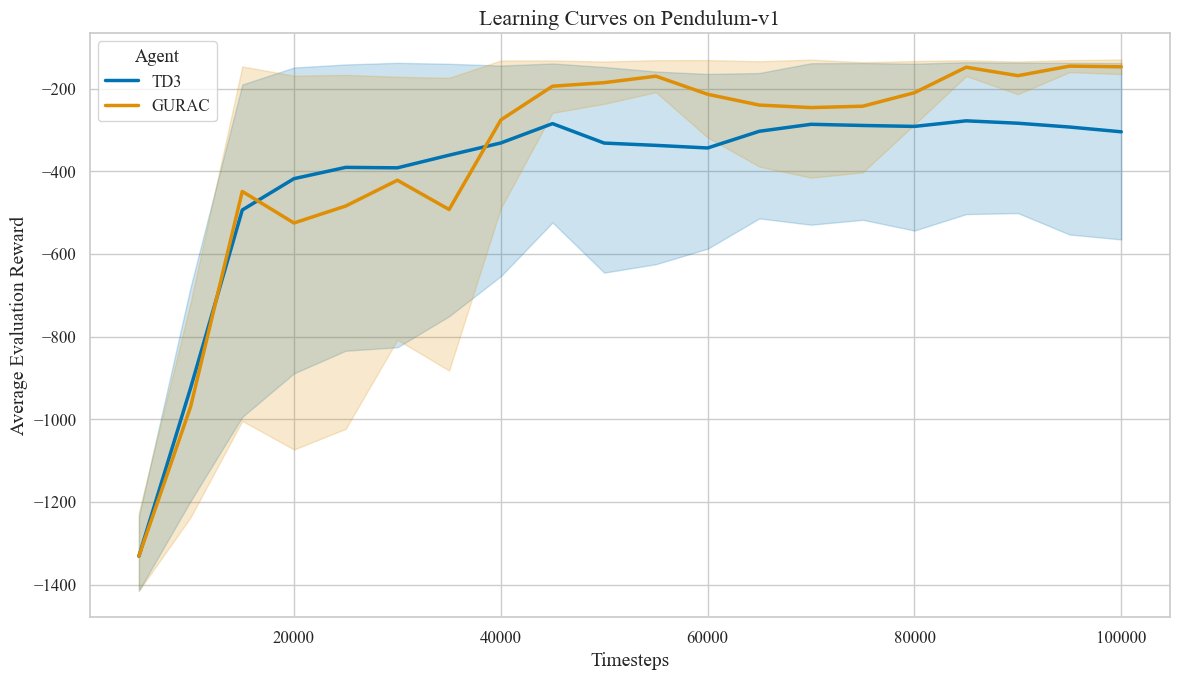} 
        \caption{Learning Curves}
        \label{fig:learning_curves_results}
    \end{subfigure}
    \hfill
    \begin{subfigure}[b]{\textwidth}
        \centering
        \includegraphics[width=\textwidth]{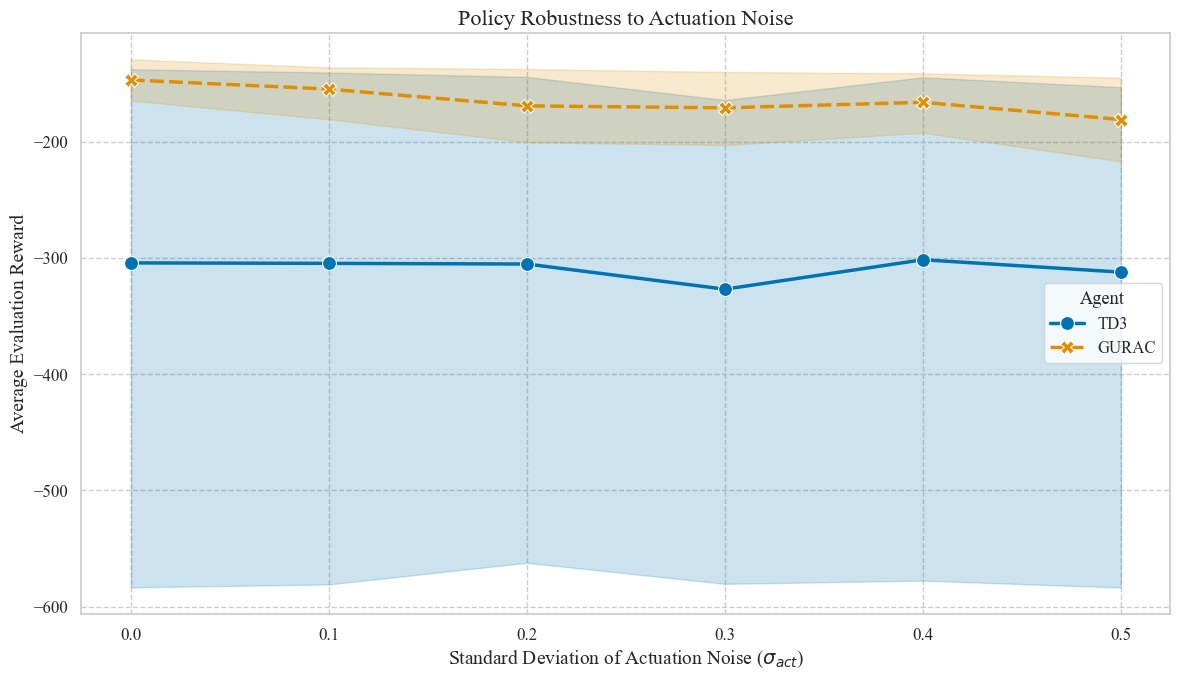} 
        \caption{Robustness to Actuation Noise}
        \label{fig:robustness_results}
    \end{subfigure}
    \caption{Empirical results on the \texttt{Pendulum-v1} environment, averaged over 10 random seeds. (a) Learning curves (mean $\pm$ one std. dev.) show that GURAC-TD3 exhibits a significantly more stable performance trajectory compared to the baseline TD3, supporting H1. (b) Performance degradation under actuation noise shows that while the TD3 baseline has high variance, its mean performance is incidentally resilient to this specific perturbation.}
    \label{fig:full_empirical_results}
\end{figure}

\paragraph{Robustness to Actuation Noise (H2).}
To test our second hypothesis (H2), we evaluated the final converged policies from each seed under increasing levels of external Gaussian noise applied to their actions. The results, plotted in Figure \ref{fig:full_empirical_results}b, reveal a more complex and scientifically interesting outcome.

The GURAC agent begins with a higher average reward in the zero-noise setting, a direct consequence of its more reliable training, and its performance degrades predictably as noise increases. The baseline TD3 agent, conversely, shows a slightly flatter degradation curve, suggesting its mean performance is more resilient to moderate noise levels. However, this apparent robustness of the mean is misleading. The extremely large confidence interval for the TD3 agent reveals that its performance is highly unreliable; while a few successful policies may be robust, many others perform very poorly. The GURAC agent, with its much tighter confidence interval, provides a significantly more dependable performance guarantee across all noise levels. Thus, while GURAC is not strictly dominant under this specific perturbation, it yields a far more reliable and predictable policy.

\paragraph{Discussion of Results.}
The empirical study yields two critical insights. First, our theoretically-derived GURAC penalty is highly effective at stabilizing the training process of actor-critic agents, a significant practical benefit for RL practitioners. The monotonic and low-variance learning curves of GURAC-TD3 are a direct testament to its utility.

Second, the relationship between robustness to internal gradient uncertainty and robustness to external actuation noise is non-trivial. The fact that GURAC did not uniformly outperform the baseline in the actuation noise test is an important scientific finding. It suggests that different forms of robustness are not necessarily mutually inclusive. The GURAC agent learns a conservative policy that is cautious about its own internal model, which leads to stable learning. The standard TD3 agent, free of this constraint, may learn a more brittle policy that is fine-tuned to the training conditions but happens to be incidentally more resistant to this specific type of external noise. This highlights a crucial direction for future work: designing algorithms that can achieve both learning stability and broad robustness to multiple, distinct sources of uncertainty.

\section{Conclusion}
\label{sec:conclusion}

In this paper, we introduced a novel robust control framework that addresses uncertainty in the value function's gradient. This led to the Hamilton-Jacobi-Bellman-Isaacs Equation with Gradient Uncertainty (GU-HJBI), a new class of highly nonlinear PDEs. We established its theoretical foundation by proving the existence and uniqueness of its viscosity solution under a uniform ellipticity condition. Our analysis of the linear-quadratic case yielded a fundamental insight: any degree of gradient uncertainty destroys the classical quadratic structure of the value function, inducing an inherently nonlinear optimal control law. We characterized this effect through a rigorous perturbation analysis and validated our findings with numerical studies. Finally, we bridged theory to practice by proposing the Gradient-Uncertainty-Robust Actor-Critic (GURAC) algorithm, providing a principled approach to stabilize reinforcement learning agents, a benefit we confirmed through empirical experiments.
 

\newpage

\bibliographystyle{plainnat}
\bibliography{reference}

\newpage
\begin{appendices}
\part*{Appendices}
\counterwithin{equation}{section}
\counterwithin{theorem}{section}

\section{Proof of Proposition \ref{prop:reduced_pde}} \label{app:proof_reduced_pde}
\begin{proof}
Let $p = \nablaV(x)$ and $X = \HessV(x)$. The adversary's problem inside the $\inf_{u \in \U}$ operator in equation \eqref{eq:new_pde} is:
\[
\sup_{h \in \R^m, \delta \in \Deltae} \left\{ L(x,u) + (p + \delta)^T (f(x,u) + \sigma(x,u)h) - \frac{1}{2\eta} \norm{h}^2 + \frac{1}{2} \Tr\left(\sigma\sigma^T X\right) \right\}.
\]
The objective function is continuous in both $h$ and $\delta$. The set $\Deltae$ is compact. While $\R^m$ is not compact, the objective is strictly concave in $h$ and tends to $-\infty$ as $\norm{h} \to \infty$, ensuring a unique maximizer exists. We can thus solve the inner maximization problems iteratively. Let's first solve for the optimal $h$ for a fixed state $x$, control $u$, gradient $p$, and gradient perturbation $\delta$. We isolate the terms involving $h$:
\[
\sup_{h \in \R^m} \left\{ (p + \delta)^T \sigma(x,u)h - \frac{1}{2\eta} \norm{h}^2 \right\}.
\]
This is a standard unconstrained quadratic maximization problem for $h$. The objective function $\Psi(h) \coloneqq (p + \delta)^T \sigma(x,u)h - \frac{1}{2\eta} h^T h$ is strictly concave because its Hessian with respect to $h$ is $-\frac{1}{\eta}I$, which is negative definite for $\eta > 0$. The first-order condition for the maximum is found by setting the gradient $\nabla_h \Psi(h)$ to zero:
\[
\nabla_h \Psi(h) = \sigma(x,u)^T(p+\delta) - \frac{1}{\eta}h = 0.
\]
Solving for $h$ gives the unique optimal drift perturbation as a function of $x, u, p, \delta$:
\begin{equation} \label{eq:h_star_app_detail}
h^*(x, u, p, \delta) \coloneqq \eta \, \sigma(x,u)^T (p + \delta).
\end{equation}
Now, we substitute this optimal $h^*$ back into the expression $\Psi(h)$ to find the maximized value:
\begin{align*}
\Psi(h^*) &= (p + \delta)^T \sigma(x,u)h^* - \frac{1}{2\eta} \norm{h^*}^2 \\
&= (p + \delta)^T \sigma(x,u) \left(\eta \sigma(x,u)^T (p + \delta)\right) - \frac{1}{2\eta} \norm{\eta \sigma(x,u)^T (p + \delta)}^2 \\
&= \eta (p + \delta)^T \sigma(x,u)\sigma(x,u)^T (p + \delta) - \frac{\eta^2}{2\eta} \left( (p+\delta)^T \sigma(x,u)\sigma(x,u)^T(p+\delta) \right) \\
&= \eta \norm{\sigma(x,u)^T (p + \delta)}^2 - \frac{\eta}{2} \norm{\sigma(x,u)^T (p + \delta)}^2 \\
&= \frac{\eta}{2} \norm{\sigma(x,u)^T (p + \delta)}^2.
\end{align*}
We now substitute this maximized value back into the full PDE \eqref{eq:new_pde}. The term $(p+\delta)^T f(x,u)$ was not part of the maximization over $h$, so it remains. The full objective, now only needing to be maximized over $\delta$, becomes:
\begin{align*}
&L(x,u) + (p + \delta)^T f(x,u) + \frac{1}{2} \Tr\left(\sigma\sigma^T X\right) + \frac{\eta}{2} \norm{\sigma(x,u)^T(p + \delta)}^2.
\end{align*}
This expression is then placed inside the $\sup_{\delta \in \Deltae}$ and $\inf_{u \in \U}$ operators, yielding the reduced GU-HJBI equation \eqref{eq:hjbi_gu_reduced}, thus completing the proof.
\end{proof}

\section{Proof of Proposition \ref{prop:hamiltonian_expansion}} \label{app:proof_hamiltonian_expansion}
\begin{proof}
Let $S = \sigma(x,u)\sigma(x,u)^T$ and suppress the arguments $(x,u)$ for clarity. The objective function to be maximized over $\delta \in \Deltae = \{\delta \in \R^n \mid \norm{\delta} \leq \epsilon\}$ is:
\[
\Phi(\delta) \coloneqq (p+\delta)^T f + \frac{\eta}{2}(p+\delta)^T S (p+\delta).
\]
We expand $\Phi(\delta)$ in powers of $\delta$:
\begin{align*}
\Phi(\delta) &= p^T f + \delta^T f + \frac{\eta}{2}(p^T S p + p^T S \delta + \delta^T S p + \delta^T S \delta) \\
&= p^T f + \delta^T f + \frac{\eta}{2}(p^T S p + 2p^T S \delta + \delta^T S \delta) \quad \text{(since S is symmetric)} \\
&= \underbrace{\left(p^T f + \frac{\eta}{2}p^T S p\right)}_{\text{Zeroth-order term, } \Phi_0} + \underbrace{\left(f^T + \eta p^T S\right)\delta}_{\text{First-order term}} + \underbrace{\frac{\eta}{2}\delta^T S \delta}_{\text{Second-order term}}.
\end{align*}
Let's analyze each part. The zeroth-order term $\Phi_0$ is simply the value for $\delta=0$, which corresponds to the standard robust penalty.
Let $v \coloneqq f + \eta S p$ be the drift sensitivity vector. The problem is to maximize:
\[
\Phi(\delta) = \Phi_0 + v^T\delta + \frac{\eta}{2}\delta^T S \delta \quad \text{subject to} \quad \norm{\delta} \leq \epsilon.
\]
For small $\epsilon$, we expect the linear term $v^T\delta$ to dominate the quadratic term $\frac{\eta}{2}\delta^T S \delta$. Let's formalize this. The maximum value of the linear term over the ball $\Deltae$ is found using the Cauchy-Schwarz inequality:
\[
\sup_{\norm{\delta} \leq \epsilon} v^T\delta = \epsilon \sup_{\norm{z} \leq 1} v^Tz = \epsilon \norm{v}.
\]
This maximum is achieved when $\delta$ is aligned with $v$, specifically at $\delta^* = \epsilon \frac{v}{\norm{v}}$ (assuming $v \neq 0$).

Now, let's bound the quadratic term. The matrix $S = \sigma\sigma^T$ is positive semidefinite. Its operator norm, $\norm{S}_{op}$, is its largest eigenvalue, which is finite under our standard continuity assumptions. For any $\delta \in \Deltae$:
\[
\left|\frac{\eta}{2}\delta^T S \delta\right| \leq \frac{\eta}{2} \norm{\delta}^2 \norm{S}_{op} \leq \frac{\eta\epsilon^2}{2}\norm{S}_{op}.
\]
This shows that the quadratic term is of order $O(\epsilon^2)$. Therefore, the supremum can be expanded as:
\begin{align*}
\sup_{\norm{\delta} \leq \epsilon} \Phi(\delta) &= \sup_{\norm{\delta} \leq \epsilon} \left( \Phi_0 + v^T\delta + \frac{\eta}{2}\delta^T S \delta \right) \\
&= \Phi_0 + \sup_{\norm{\delta}\leq\epsilon} (v^T\delta) + \sup_{\norm{\delta}\leq\epsilon} \left( \frac{\eta}{2}\delta^T S \delta \right).
\end{align*}
A more careful argument is to evaluate $\Phi$ at the optimizer for the linear part, $\delta^* = \epsilon \frac{v}{\norm{v}}$:
\begin{align*}
\Phi(\delta^*) &= \Phi_0 + v^T\left(\epsilon \frac{v}{\norm{v}}\right) + \frac{\eta}{2}\left(\epsilon \frac{v}{\norm{v}}\right)^T S \left(\epsilon \frac{v}{\norm{v}}\right) \\
&= \Phi_0 + \epsilon\norm{v} + \frac{\eta\epsilon^2}{2\norm{v}^2}v^T S v \\
&= \Phi_0 + \epsilon\norm{v} + O(\epsilon^2).
\end{align*}
Since the maximizer for the full convex problem on the compact ball must lie on the boundary, and for small $\epsilon$ the linear term dominates, the true maximizer is a small perturbation from $\delta^*$. The full supremum is therefore $\Phi_0 + \epsilon\norm{v} + O(\epsilon^2)$. Substituting back the full expressions for $\Phi_0$ and $v$ yields the result:
\[
\mathcal{G}(x,u,p) = \left(p^T f + \frac{\eta}{2}\norm{\sigma^T p}^2\right) + \epsilon\norm{f + \eta\sigma\sigma^T p} + O(\epsilon^2).
\]
This completes the proof.
\end{proof}

\section{Proof of Theorem \ref{thm:comparison}} \label{app:proof_comparison}
\begin{proof}
The proof employs the standard doubling of variables and penalization method, adapted to our specific Hamiltonian. For a full exposition of the technique, see \cite{FlemingSoner2006} or \cite{CrandallLions1992}.

\textbf{Step 1: Setup and Penalization.}
Assume for contradiction that $\sup_{x \in \R^n} (u(x) - v(x)) = M > 0$. We introduce a penalized function $\Phi: \R^n \times \R^n \to \R$ with parameters $\alpha, \beta > 0$:
\[
\Phi_{\alpha, \beta}(x,y) \coloneqq u(x) - v(y) - \frac{\alpha}{2}\norm{x-y}^2 - \frac{\beta}{2}(\norm{x}^2 + \norm{y}^2).
\]
Since $u$ is USC and $-v$ is USC, $u(x)-v(y)$ is USC. The quadratic terms are continuous. Thus, $\Phi_{\alpha, \beta}$ is an USC function. The term $-\frac{\beta}{2}(\norm{x}^2 + \norm{y}^2)$ ensures that $\Phi_{\alpha, \beta}(x,y) \to -\infty$ as $\norm{(x,y)} \to \infty$. Therefore, $\Phi_{\alpha, \beta}$ must attain its maximum at some finite point $(x_{\alpha,\beta}, y_{\alpha,\beta})$. For notational simplicity, we denote this maximizer by $(x,y)$.

Standard results from the theory of viscosity solutions (e.g., \cite{FlemingSoner2006}) provide the following key properties as we take limits of the penalization parameters. First let $\beta \to 0$ then $\alpha \to \infty$:
\begin{enumerate}
    \item[(i)] $x$ and $y$ remain in a compact set.
    \item[(ii)] $\alpha\norm{x - y}^2 \to 0$, which implies $x-y \to 0$.
    \item[(iii)] $u(x) - v(y) \to M$.
\end{enumerate}

\textbf{Step 2: Applying Ishii's Lemma (Maximum Principle for Semicontinuous Functions).}
Since $(x,y)$ is a maximum of $\Phi_{\alpha,\beta}$, Ishii's Lemma states that for any $\gamma > 0$, there exist symmetric matrices $X, Y \in \Sspace$ such that:
\begin{enumerate}
    \item $(p_x, X) \in \bar{J}^{2,+} u(x)$
    \item $(p_y, Y) \in \bar{J}^{2,-} v(y)$
\end{enumerate}
where $\bar{J}^{2,+}$ and $\bar{J}^{2,-}$ are the second-order super- and subjets, and the gradients are given by the derivatives of the penalization terms:
\begin{align*}
p_x &\coloneqq \nabla_x \left(\frac{\alpha}{2}\norm{x-y}^2 + \frac{\beta}{2}\norm{x}^2\right) = \alpha(x-y) + \beta x, \\
p_y &\coloneqq -\nabla_y \left(-\frac{\alpha}{2}\norm{x-y}^2 - \frac{\beta}{2}\norm{y}^2\right) = \alpha(x-y) - \beta y.
\end{align*}
Furthermore, the matrices $X$ and $Y$ satisfy the crucial inequality:
\[
-\left(\frac{1}{\gamma}+\norm{A}^2\right) \begin{pmatrix} I & 0 \\ 0 & I \end{pmatrix} \leq \begin{pmatrix} X & 0 \\ 0 & -Y \end{pmatrix} \leq A + \gamma A^2 \quad \text{where} \quad A = \alpha\begin{pmatrix} I & -I \\ -I & I \end{pmatrix}.
\]
This implies, in particular, that $X \le Y$.

\textbf{Step 3: Using the Viscosity Solution Definitions.}
Since $u$ is a viscosity subsolution and $v$ is a viscosity supersolution of $F(z, V, \nablaV, \HessV) = 0$, we have:
\begin{align}
\rho u(x) - \inf_{u' \in \U} \mathcal{H}(x, p_x, X, u') &\leq 0 \label{eq:subsol} \\
\rho v(y) - \inf_{u' \in \U} \mathcal{H}(y, p_y, Y, u') &\geq 0 \label{eq:supersol}
\end{align}
From the supersolution inequality \eqref{eq:supersol}, for any $\delta_{c} > 0$, there exists a control $u_{\delta_c} \in \U$ such that:
\[
\rho v(y) \ge \mathcal{H}(y, p_y, Y, u_{\delta_c}) - \delta_{c}.
\]
From the subsolution inequality \eqref{eq:subsol}, we must have for this same control $u_{\delta_c}$:
\[
\rho u(x) \le \mathcal{H}(x, p_x, X, u_{\delta_c}).
\]
Subtracting the two inequalities yields:
\begin{align*}
\rho(u(x) - v(y)) \leq \mathcal{H}(x, p_x, X, u_{\delta_c}) - \mathcal{H}(y, p_y, Y, u_{\delta_c}) + \delta_{c}.
\end{align*}
Let $a(z, u') = \sigma(z,u')\sigma(z,u')^T$ and let $\mathcal{G}(z, u', q) = \sup_{\norm{\delta'}\le\epsilon} [ (q+\delta')^T f(z,u') + \frac{\eta}{2}\norm{\sigma(z,u')^T(q+\delta')}^2 ]$.
\begin{align*}
\rho(u(x) - v(y)) \leq & \left( L(x, u_{\delta_c}) - L(y, u_{\delta_c}) \right) + \left( \mathcal{G}(x, u_{\delta_c}, p_x) - \mathcal{G}(y, u_{\delta_c}, p_y) \right) \\
& + \frac{1}{2}\left(\Tr(a(x, u_{\delta_c})X) - \Tr(a(y, u_{\delta_c})Y)\right) + \delta_{c}.
\end{align*}

\textbf{Step 4: Analyzing the Difference Terms in the Limit.}
We now take the limits $\delta_{c} \to 0$, then $\beta \to 0$, and finally $\alpha \to \infty$.
\begin{itemize}
    \item The term $L(x, u_{\delta_c}) - L(y, u_{\delta_c}) \to 0$ because $x-y \to 0$ and $L$ is uniformly continuous in $x$ on the compact set where $x, y$ reside.
    \item The term $\mathcal{G}(x, u_{\delta_c}, p_x) - \mathcal{G}(y, u_{\delta_c}, p_y) \to 0$. This is because $x \to y$, $p_x - p_y = \beta(x+y) \to 0$, and the functions $f, \sigma$ are uniformly continuous. The supremum operator preserves continuity in this setting, so $\mathcal{G}$ is also uniformly continuous in its arguments on the relevant compact sets.
    \item The trace term is the most critical. We split it as follows:
    \[
    \frac{1}{2}\Tr(a(x, u_{\delta_c})X - a(y, u_{\delta_c})Y) = \frac{1}{2}\Tr((a(x, u_{\delta_c}) - a(y, u_{\delta_c}))X) + \frac{1}{2}\Tr(a(y, u_{\delta_c})(X-Y)).
    \]
    The first part, $\frac{1}{2}\Tr((a(x, u_{\delta_c}) - a(y, u_{\delta_c}))X)$, goes to zero because $a$ is continuous, $x \to y$, and $X$ is bounded.
    For the second part, we use two key facts:
    \begin{enumerate}
        \item From Ishii's Lemma, we have $X \le Y$, which means $X-Y$ is a negative semidefinite matrix.
        \item From the Uniform Ellipticity (Assumption \ref{ass:ellipticity}), we have $a(y, u_{\delta_c}) = \sigma(y,u_{\delta_c})\sigma(y,u_{\delta_c})^T \ge \nu I$ for some $\nu > 0$.
    \end{enumerate}
    The trace of a product of a positive definite matrix and a negative semidefinite matrix is non-positive. Thus,
    \[
    \Tr(a(y, u_{\delta_c})(X-Y)) \le 0.
    \]
\end{itemize}
Taking the limit of the entire inequality, we have:
\[
\rho M = \lim \rho(u(x)-v(y)) \le 0 + 0 + 0 = 0.
\]
Since we assumed $\rho > 0$, this implies $M \le 0$. This contradicts our initial assumption that $M > 0$. Therefore, we must have $M \le 0$, which means $u(x) \le v(x)$ for all $x \in \R^n$.
\end{proof}

\section{Proof of Proposition \ref{prop:G_geometries}} \label{app:proof_G_geometries}
\begin{proof}
As established in Appendix \ref{app:proof_hamiltonian_expansion}, for small $\epsilon$, the first-order correction term is given by maximizing the linear term $v^T\delta$ over the respective uncertainty set $\Deltae$. The higher-order terms are $O(\epsilon^2)$. We analyze each case.

\paragraph{1. $\ell_2$ Uncertainty ($\Delta_\epsilon^{(2)} = \{\delta \mid \norm{\delta}_2 \le \epsilon\}$):}
The problem is to find $\sup_{\norm{\delta}_2 \le \epsilon} v^T\delta$.
By the Cauchy-Schwarz inequality, $v^T\delta \le \norm{v}_2 \norm{\delta}_2$. Since $\norm{\delta}_2 \le \epsilon$, we have $v^T\delta \le \epsilon \norm{v}_2$. Equality is achieved when $\delta$ is aligned with $v$, specifically for $\delta^* = \epsilon \frac{v}{\norm{v}_2}$ (if $v \neq 0$). Thus,
\[ \sup_{\norm{\delta}_2 \le \epsilon} v^T\delta = \epsilon \norm{v}_2. \]

\paragraph{2. $\ell_\infty$ Uncertainty ($\Delta_\epsilon^{(\infty)} = \{\delta \mid \norm{\delta}_\infty \le \epsilon\}$):}
The problem is to find $\sup_{\norm{\delta}_\infty \le \epsilon} v^T\delta$. This is equivalent to maximizing $\sum_{i=1}^n v_i \delta_i$ subject to $|\delta_i| \le \epsilon$ for all $i$. To maximize this sum, we should choose each $\delta_i$ to have the same sign as $v_i$ and the maximum possible magnitude, $\epsilon$.
Therefore, the optimal perturbation is $\delta_i^* = \epsilon \cdot \mathrm{sgn}(v_i)$. The maximum value is:
\[ \sum_{i=1}^n v_i (\epsilon \cdot \mathrm{sgn}(v_i)) = \epsilon \sum_{i=1}^n v_i \cdot \mathrm{sgn}(v_i) = \epsilon \sum_{i=1}^n |v_i| = \epsilon \norm{v}_1. \]
This is the definition of the dual norm: $\sup_{\norm{\delta}_\infty \le \epsilon} v^T\delta = \epsilon \norm{v}_1$.

\paragraph{3. Quadratic Form ($M$) Uncertainty ($\Delta_\epsilon^{(M)} = \{\delta \mid \delta^T M \delta \le \epsilon^2\}$):}
The problem is to solve the convex optimization problem:
\[ \max_{\delta \in \R^n} v^T\delta \quad \text{subject to} \quad \delta^T M \delta \le \epsilon^2. \]
Since $M$ is positive definite, the constraint set is a compact ellipsoid. The maximizer must lie on the boundary, so the constraint is active: $\delta^T M \delta = \epsilon^2$. We use the method of Lagrange multipliers. The Lagrangian is:
\[ \mathcal{L}(\delta, \lambda) = v^T\delta - \frac{\lambda}{2}(\delta^T M \delta - \epsilon^2), \]
where $\lambda > 0$ is the Lagrange multiplier. Taking the gradient with respect to $\delta$ and setting it to zero gives the first-order condition:
\[ \nabla_\delta \mathcal{L} = v - \lambda M \delta = 0 \implies \delta = \frac{1}{\lambda}M^{-1}v. \]
To find $\lambda$, we substitute this expression for $\delta$ back into the active constraint:
\begin{align*}
\left(\frac{1}{\lambda}M^{-1}v\right)^T M \left(\frac{1}{\lambda}M^{-1}v\right) &= \epsilon^2 \\
\frac{1}{\lambda^2} v^T (M^{-1})^T M M^{-1} v &= \epsilon^2 \\
\frac{1}{\lambda^2} v^T M^{-1} v &= \epsilon^2 \quad \text{(since M is symmetric, so is } M^{-1}\text{)} \\
\frac{1}{\lambda} &= \frac{\epsilon}{\sqrt{v^T M^{-1} v}}.
\end{align*}
Now, we substitute this back into the expression for the optimal $\delta$ and then into the objective function $v^T\delta$:
\begin{align*}
\text{max value} &= v^T \delta^* = v^T \left(\frac{1}{\lambda}M^{-1}v\right) \\
&= \left(\frac{\epsilon}{\sqrt{v^T M^{-1} v}}\right) v^T M^{-1} v \\
&= \epsilon \frac{v^T M^{-1} v}{\sqrt{v^T M^{-1} v}} \\
&= \epsilon \sqrt{v^T M^{-1} v}.
\end{align*}
This value is precisely $\epsilon$ times the norm of $v$ induced by the matrix $M^{-1}$, i.e., $\epsilon \norm{v}_{M^{-1}}$.
\end{proof}

\section{Second-Order Perturbation Analysis Details} \label{app:second_order}
This appendix provides a formal derivation of the source term $H_2(x)$ in the second-order perturbation equation \eqref{eq:v2_pde}. The derivation proceeds by expanding the full GU-HJBI equation in powers of $\epsilon$ and collecting terms of order $\epsilon^2$.

We begin with the approximate GU-HJBI equation \eqref{eq:hjbi_approx}, which we write as $\rho V = \inf_u \mathcal{H}(V, u, \epsilon)$, where the total Hamiltonian is:
\begin{align*}
    \mathcal{H}(V, u, \epsilon) \coloneqq & \underbrace{x^T Q x + u^T R u}_{L(x,u)} + (\nablaV)^T(Ax+Bu) + \frac{1}{2}\Tr(\Sigma\Sigma^T \HessV V) \\
    & + \frac{\eta}{2}\norm{\Sigma^T \nablaV}^2 + \epsilon\norm{Ax+Bu + \eta\Sigma\Sigma^T \nablaV}.
\end{align*}
Let $S \coloneqq \Sigma\Sigma^T$. We introduce the perturbation expansions for the value function and the optimal control policy:
\begin{align*}
    V(x, \epsilon) &= V_0(x) + \epsilon V_1(x) + \epsilon^2 V_2(x) + O(\epsilon^3) \\
    u^*(x, \epsilon) &= u_0(x) + \epsilon u_1(x) + \epsilon^2 u_2(x) + O(\epsilon^3)
\end{align*}
Let $p_i \coloneqq \nabla V_i$ and $X_i \coloneqq \HessV V_i$. The corresponding expansions for the gradient and Hessian are $p(\epsilon) = p_0 + \epsilon p_1 + \epsilon^2 p_2 + \dots$ and $X(\epsilon) = X_0 + \epsilon X_1 + \epsilon^2 X_2 + \dots$.

\paragraph{Step 1: Expand the First-Order Optimality Condition.}
The optimal control $u^*$ is determined by the first-order condition (FOC) $\nabla_u \mathcal{H}(V, u^*, \epsilon) = 0$. Let $v(u,p) \coloneqq Ax+Bu + \eta S p$. The FOC is:
\begin{equation} \label{eq:app_foc_full}
    2Ru + B^T p + \epsilon \, \nabla_u \norm{v(u,p)} = 0.
\end{equation}
The gradient of the norm is $\nabla_u \norm{v} = \frac{(\nabla_u v)^T v}{\norm{v}} = \frac{B^T v}{\norm{v}}$. Substituting the expansions for $u$ and $p$ into the FOC and collecting powers of $\epsilon$ yields:

\textit{Order $\epsilon^0$:}
\[ 2Ru_0 + B^T p_0 = 0 \implies u_0 = -\frac{1}{2}R^{-1}B^T p_0 = -R^{-1}B^T P_0 x. \]
This is the standard zeroth-order linear control law.

\textit{Order $\epsilon^1$:} Collecting the $\epsilon^1$ terms from the FOC gives:
\[ 2Ru_1 + B^T p_1 + \frac{B^T v(u_0, p_0)}{\norm{v(u_0, p_0)}} = 0. \]
Let $v_0 \coloneqq v(u_0, p_0) = Ax+Bu_0+\eta S p_0 = \Aeff x$. This gives the first-order control correction:
\begin{equation} \label{eq:app_u1_expr}
    u_1 = -\frac{1}{2}R^{-1}B^T p_1 - \frac{1}{2}R^{-1} \frac{B^T v_0}{\norm{v_0}}.
\end{equation}
Note that this is a more detailed expression than the simplified one in the main text, and it is a nonlinear function of $x$ through both $p_1 = \nabla V_1$ and the norm term.

\paragraph{Step 2: Expand the GU-HJBI Equation.}
We now substitute the expansions for $V$ and $u^*$ into the PDE $\rho V = \mathcal{H}(V, u^*, \epsilon)$.
\begin{align*}
    \rho(V_0 + \epsilon V_1 + \epsilon^2 V_2) = & L(x, u_0+\epsilon u_1) + (p_0+\epsilon p_1+\epsilon^2 p_2)^T(Ax+B(u_0+\epsilon u_1)) \\
    & + \frac{1}{2}\Tr(S(X_0+\epsilon X_1+\epsilon^2 X_2)) + \frac{\eta}{2}\norm{\Sigma^T(p_0+\epsilon p_1)}^2 \\
    & + \epsilon\norm{Ax+B(u_0+\epsilon u_1)+\eta S(p_0+\epsilon p_1)} + O(\epsilon^3).
\end{align*}
Equating coefficients of $\epsilon^0$ and $\epsilon^1$ gives the known equations for $V_0$ (the ARE) and $V_1$ (Eq. \ref{eq:v1_pde}). To find the equation for $V_2$, we collect all terms of order $\epsilon^2$.

\paragraph{Step 3: Collect $\epsilon^2$ terms.}
The left-hand side is $\rho V_2$. On the right-hand side, we separate terms that depend on $V_2$ (via $p_2, X_2, u_2$) from those that form the source term $H_2(x)$.
The terms linear in the index '2' form the operator $\Llin[V_2]$:
\begin{align*}
    \Llin[V_2] &= (\nabla_u L) u_2 + p_2^T(Ax+Bu_0) + p_0^T B u_2 + \frac{1}{2}\Tr(S X_2) + \eta p_0^T S p_2 \\
    &= (2Ru_0^T + p_0^T B) u_2 + p_2^T(Ax+Bu_0) + \eta p_0^T S p_2 + \frac{1}{2}\Tr(S X_2).
\end{align*}
From the zeroth-order FOC, the term multiplying $u_2$ is zero. The remaining terms are:
\begin{align*}
\Llin[V_2] &= p_2^T(A - BR^{-1}B^T P_0)x + \eta (2P_0 x)^T S p_2 + \frac{1}{2}\Tr(S X_2) \\
&= p_2^T (A - BR^{-1}B^T P_0 + 2\eta S P_0)x + \frac{1}{2}\Tr(S X_2) \\
&= (\nabla V_2)^T(\Aeff x) + \frac{1}{2}\Tr(\Sigma\Sigma^T \HessV V_2),
\end{align*}
which is exactly the linear operator from the $V_1$ equation, as expected.

The source term $H_2(x)$ consists of all other $\epsilon^2$ terms, which depend only on the zeroth and first-order solutions.
\begin{enumerate}
    \item From $L(x,u)$: $u_1^T R u_1$.
    \item From $p^T(Ax+Bu)$: $p_1^T B u_1$.
    \item From $\frac{\eta}{2}\norm{\Sigma^T p}^2$: $\frac{\eta}{2} p_1^T S p_1$.
    \item From the expansion of $\epsilon\norm{v(\epsilon)}$: Let $v_1 \coloneqq Bu_1 + \eta S p_1$. The $\epsilon^2$ coefficient is $\frac{v_0^T v_1}{\norm{v_0}}$. (A detailed Taylor expansion is shown in the thought process).
\end{enumerate}
Summing these gives the initial expression for the source term:
\[
    H_2(x) = u_1^T R u_1 + p_1^T B u_1 + \frac{\eta}{2} p_1^T S p_1 + \frac{v_0^T(Bu_1 + \eta S p_1)}{\norm{v_0}}.
\]
We can simplify this expression. Grouping terms by $u_1$:
\[
    H_2(x) = u_1^T R u_1 + \left(p_1^T B + \frac{v_0^T B}{\norm{v_0}}\right)u_1 + \left(\frac{\eta}{2}p_1^T S p_1 + \frac{\eta v_0^T S p_1}{\norm{v_0}}\right).
\]
From the FOC for $u_1$ (Eq. \ref{eq:app_u1_expr}), we have $2Ru_1 = -B^T p_1 - \frac{B^T v_0}{\norm{v_0}}$. Transposing this gives $(p_1^T B + \frac{v_0^T B}{\norm{v_0}}) = -2u_1^T R$. Substituting this into the middle term yields a significant simplification:
\begin{align*}
    H_2(x) &= u_1^T R u_1 + (-2u_1^T R)u_1 + \frac{\eta}{2}p_1^T S p_1 + \frac{\eta v_0^T S p_1}{\norm{v_0}} \\
    H_2(x) &= -u_1^T R u_1 + \frac{\eta}{2}p_1^T S p_1 + \eta \frac{(\Aeff x)^T S \nabla V_1(x)}{\norm{\Aeff x}}.
\end{align*}
This is the final, simplified expression for the source term of the second-order PDE. It is a non-polynomial function of $x$ that depends on the first-order solution $V_1$ and its gradient. The term $-u_1^T R u_1$ can be interpreted as the cost incurred by the first-order control correction, while the other terms arise from the nonlinear interactions of the robustness penalties.

\section{Numerical Method Details} \label{app:numerical_methods}
\paragraph{1D Problem.} The second-order ODE for $V_1(x)$ (Eq. \ref{eq:v1_pde_1d}) was solved on a bounded domain $x \in [-L, L]$ with $L=10$. We used a second-order central finite difference scheme on a uniform grid of 2001 points. For large $|x|$, the PDE is dominated by the drift term, $\rho V_1 \approx a_{eff,0}x V_1'$. Since $a_{eff,0}<0$, this suggests that $V_1'(x)$ approaches zero for large $|x|$. Thus, we imposed homogeneous Neumann boundary conditions: $V_1'(-L)=V_1'(L)=0$. This results in a tridiagonal system of linear equations which is solved directly. The gradient $V_1'(x)$ used for the control law is computed using second-order finite differences from the solution $V_1(x)$.

\paragraph{2D Problem.} The second-order PDE for $V_1(x_1,x_2)$ (Eq. \ref{eq:v1_pde} in 2D) was solved on the square domain $[-L,L]^2$ with $L=5$. We used the Finite Element Method (FEM) with a standard Galerkin formulation. The domain was discretized using a uniform triangular mesh. We used continuous, piecewise linear (P1) Lagrange basis functions. Similar to the 1D case, an asymptotic analysis suggests that the gradient of $V_1$ should vanish at the boundary, so we imposed a homogeneous Neumann boundary condition on the entire boundary $\partial([-L,L]^2)$. The resulting sparse linear system was solved using a direct solver (e.g., UMFPACK). The vector field for the control correction $u_1(x)$ was computed by numerically differentiating the FEM solution for $V_1(x)$ at the nodes of the mesh.

\section{Justification of the Perturbation Expansion} \label{app:perturb_rigor}
The formal expansion in Section \ref{sec:lq_analysis} can be justified rigorously using the Implicit Function Theorem in appropriate Banach spaces (e.g., weighted Hölder or Sobolev spaces). Let $\mathcal{X}$ be such a space of functions. We can represent the approximate GU-HJBI equation \eqref{eq:hjbi_approx} as an operator $\mathcal{N}: \mathcal{X} \times \R \to \mathcal{Y}$ (where $\mathcal{Y}$ is another function space):
\begin{align*}
    \mathcal{N}(V, \epsilon) \coloneqq \rho V - \inf_{u \in \R^k} \Big\{ & L(x,u) + \nablaV^T(Ax+Bu) + \frac{1}{2}\Tr(\Sigma\Sigma^T \HessV V) \\
    & + \frac{\eta}{2}\norm{\Sigma^T \nablaV}^2 + \epsilon\norm{Ax+Bu + \eta\Sigma\Sigma^T \nablaV} \Big\} = 0.
\end{align*}
At $\epsilon=0$, we have a known smooth solution $V_0$ which solves $\mathcal{N}(V_0, 0)=0$. The Implicit Function Theorem guarantees the existence of a smooth solution branch $V(\epsilon)$ bifurcating from $V_0$ provided that the Fréchet derivative of $\mathcal{N}$ with respect to $V$, evaluated at $(V_0, 0)$, is an invertible linear operator.

This derivative, denoted $D_V\mathcal{N}(V_0, 0)$, is precisely the linearized elliptic operator $\Llin$ found in the PDE for $V_1$. For a test function $W \in \mathcal{X}$, this operator is given by:
\[ \Llin[W](x) \coloneqq \rho W(x) - \left( (\nabla W(x))^T (\Aeff x) + \frac{1}{2}\Tr(\Sigma\Sigma^T \HessV W(x)) \right). \]
The equation $\Llin[W] = g$ is a linear second-order PDE. The invertibility of this operator on suitable function spaces is a standard result from the theory of elliptic PDEs, provided the corresponding dynamics are stable. Since the zeroth-order solution $P_0$ is chosen to be the unique stabilizing solution to the ARE (Assumption \ref{ass:lq_stabilizability}), the matrix $\Aeff$ is Hurwitz (stable), which ensures the operator $\Llin$ is invertible and justifies our formal expansion.

\end{appendices}

\end{document}